\newcommand{\eti}[1]{{\scshape equal treatment inspector}}
\theoremstyle{plain}
\newtheorem{theorem}{Theorem}[section]
\newtheorem{lemma}[theorem]{Lemma}
\theoremstyle{definition}
\newtheorem{definition}[theorem]{Definition}
\theoremstyle{remark}
\theoremstyle{definition}
\newtheorem{example}{Example}[section]
\newcommand{\D}{\mathcal{D}}
\newcommand{\Dd}[1]{\mathcal{D}^{#1}}
\newcommand{\Ss}{\mathcal{S}}
\newcommand{\val}{\mathrm{val}}
\DeclareMathOperator*{\argmin}{arg\,min}
\def\eqref#1{equation~\ref{#1}}
\def\1{\bm{1}}
\DeclareMathAlphabet{\mathsfit}{\encodingdefault}{\sfdefault}{m}{sl}
\SetMathAlphabet{\mathsfit}{bold}{\encodingdefault}{\sfdefault}{bx}{n}
\title{Beyond Demographic Parity:\\ Redefining Equal Treatment}
\author{%
  Carlos Mougan \\
 University of Southampton\\
 United Kingdom\\
\And
Laura State \\
Scuola Normale Superiore\\
 University of Pisa, Italy\\
\And
Antonio Ferrara\\
Gesis\\
RWTH Aachen University, Germany
\And
Salvatore Ruggieri\\
University of Pisa\\
Italy\\
\And
Steffen Staab\\
University of Southampton\\
University of Stuttgart, Germany
}
\begin{document}

\maketitle

\begin{abstract}

Liberalism-oriented political philosophy reasons that all individuals should be treated equally independently of their protected characteristics.
Related work in machine learning has translated the concept of \emph{equal treatment} into terms of \emph{equal outcome} and measured it as \emph{demographic parity} (also called \emph{statistical parity}).
Our analysis reveals that the two concepts of equal outcome and equal treatment diverge; therefore, demographic parity does not faithfully represent the notion of \emph{equal treatment}.
We propose a new formalization for equal treatment by (i) considering the influence of feature values on predictions, such as computed by Shapley values decomposing predictions across its features, 
(ii) defining distributions of explanations, and (iii) comparing explanation distributions between populations with different protected characteristics. We show the theoretical properties of our notion of equal treatment and devise a classifier two-sample test based on the AUC of an equal treatment inspector. We study our formalization of equal treatment on synthetic and natural data. We release \texttt{explanationspace}, an open-source Python package with methods and tutorials.
\end{abstract}

\section{Introduction}

In philosophy, long-held discussions about what constitutes  a fair or an unfair political system have led to established frameworks of distributive justice~\citep{distributiveJustice,kymlicka2002contemporary}. 
From the \emph{egalitarian} school of thought, the  \emph{equal opportunity} concept was argued by~\cite{rawls1958justice}. The concept has been translated into computable metrics with the same name~\citep{DBLP:conf/nips/HardtPNS16}. From a machine learning perspective, the technical drawback is that  metrics for equal opportunity require label annotations for true positive outcomes, which are not always available after the deployment of a model.

The \emph{liberalism}  school of thought\footnote{We use the term \emph{liberalism} to refer to the perspective exemplified by~\cite{friedman1990free}. This perspective can also  be referred to as \emph{neoliberalism} or \emph{libertarianism}~\citep{distributiveJustice,wiki_friedman}. }, put forward  by scholars such as~\cite{friedman1990free} and~\cite{nozick1974anarchy}, requires \emph{equal treatment} of individuals regardless of their protected characteristics. 
This concept has been translated by the machine learning literature~\citep{DBLP:conf/aies/SimonsBW21,DBLP:conf/fat/HeidariLGK19,wachter2020bias} into the requirement that machine learning predictions should achieve \emph{equal outcomes} for groups with different protected characteristics. The corresponding measure,  \textit{demographic parity} (also called \emph{statistical  parity}), compares the different distributions of predicted outcomes  of a model $f$ for different social groups, e.g., protected vs dominant. We will show, however, that the metric of demographic parity may indicate fairness, although groups are indeed treated differently.

We leave the normative discussion of which philosophical paradigm should be pursued by policy to the discourse in the social sciences and the broad public. However, our analysis contributes to a foundational understanding of fairness in machine learning. Moreover, we remedy the divergence between the liberalism-oriented philosophical requirement of equal treatment and actual measures used in fair machine learning approaches by proposing a novel method for measuring {equal treatment}. 

Comparing different social groups, we measure how non-protected features of individuals interact with the trained model $f$ as explained by Shapley value attributions~\citep{DBLP:conf/nips/LundbergL17}. If two social groups are treated the same, the distributions of interaction behavior, which we call \emph{explanation distributions}, will not be distinguishable. We introduce a tool, the \enquote{Equal Treatment Inspector}, that implements this idea. When detecting unequal treatment, it explains the features involved in such an inequality, supporting understanding of the causes of unfairness in the machine learning model. In summary, our contributions~are:

\begin{enumerate}
    \setlength\itemsep{0.1em}

    \item The definition of {explanation distributions} as a basis for measuring {equal treatment}.

    \item The definition of a novel method for {recognizing and explaining unequal treatment}.
    
    \item The study of the formal relationship between {equal outcome} and {equal treatment}.
    
    \item A novel Classifier Two Sample Test (C2ST) based on the AUC.

    \item A study of synthetic and natural data to demonstrate our method and compare with related work.

    \item An open-source Python package \texttt{explanationspace} implementing the \enquote{Equal Treatment Inspector} that is \texttt{scikit-learn} compatible together with documentation and tutorials.
     
\end{enumerate}

\vspace{-0.3cm}
\section{Foundations and Related work}\label{sec:relatedWork}
\vspace{-0.3cm}
This section briefly surveys the philosophical and technical foundations of our contribution as well as related work. We also build on Shapley values, which are generally known in the machine learning community, but for self-containedness 
we provide their mathematical definition in Appendix~\ref{app:foundation.xai}.
\vspace{-0.3cm}
\subsection{Basic Notations and Formal Definitions of Fairness in Related Work}
\vspace{-0.3cm}
In supervised learning, a function $f_\theta: X  \to Y$, also called a model, is induced from a set of observations, called the training set, $\Dd{} =\{(x_1,y_1),\ldots,(x_n,y_n)\} \sim X \times Y$, 
where $X = \{X_1, \ldots, X_p\}$ represents predictive features and $Y$ is the target feature. 
The domain of the target feature is $\mathrm{dom}(Y)=\{0, 1\}$ (binary classification) or $\mathrm{dom}(Y) = \mathbb{R}$ (regression). For binary classification, we assume a probabilistic classifier, and we  denote by $f_\theta(x)$ the estimate of the probability $P(Y=1|X=x)$ over the (unknown)  distribution of $X \times Y$. For regression, $f_\theta(x)$ estimates $E[Y|X=x]$. We call the projection of $\D$ on $X$, written $\D_X{} = \{x_1, \ldots, x_n\} \sim X$,  the empirical \textit{input distribution}. 
The dataset $f_\theta({\D_X}) = \{f_\theta(x) \ |\ x \in \D_X{} \}$ is called the empirical \textit{prediction distribution}.

We assume a feature modeling protected social groups denoted by $Z$, called \textit{protected feature}, and assume it to be binary valued in the theoretical analysis. $Z$ can either be included in the predictive features $X$ used by a model or not. If not, we assume that it is still available for a test dataset. Even without the protected feature in training data, a model can discriminate against the protected groups by using correlated features as a proxy of the protected one~\citep{DBLP:conf/kdd/PedreschiRT08}.

We write $A \bot B$
to denote statistical independence between the two sets of random variables $A$ and $B$, or equivalently, between two multivariate probability distributions. We define two common fairness notions and corresponding fairness metrics that quantify a model's degree of discrimination or unfairness \citep{DBLP:journals/csur/MehrabiMSLG21}. 

\begin{definition}\textit{(Demographic Parity (DP))}. A model $f_\theta$ achieves demographic parity if $f_\theta(X) \perp Z$\label{def:dp}.
\end{definition}

Thus, demographic parity holds if  $\forall z.\,P(f_\theta(X)|Z=z)=P(f_\theta(X))$. For binary $Z$'s, we can derive an unfairness metric as $d(P(f_\theta(X)|Z=1),P(f_\theta(X))$, where $d(\cdot)$ is a distance between probability distributions.

\begin{definition}\textit{(Equal Opportunity (EO))}\label{eq:TPR} A model $f_\theta$ achieves equal opportunity if $\forall z.\,P(f_\theta(X)|Y=1,Z=z) = P(f_\theta(X)=1|Y=1)$.\label{def:eo}
\end{definition}
As before, we can measure unfairness for binary $Z$'s as  $d(P(f_\theta(X)|Y=1,Z=1),P(f_\theta(X)=1|Y=1))$. Equal opportunity comes with the problem that labels for correct outcomes are required, but difficult or even impossible to collect \cite{DBLP:conf/aaai/Ruggieri0PST23}.

\vspace{-0.3cm}
\subsection{Philosophical Foundations and Computable Fairness Metrics}
\vspace{-0.3cm}
Political and moral philosophers from the \textbf{egalitarian} school of thought often consider \emph{equal opportunity} to be the key promoter of fairness and social justice,  providing qualified individuals with equal chances to succeed regardless of their background or circumstances ~\cite{rawls1958justice,rawls1991justice}, \cite{dworkin1981equality,dworkin1981equality2}, \cite{arneson1989equality}, \cite{cohen1989currency}. In fair ML, \cite{DBLP:conf/nips/HardtPNS16} proposed translating equal opportunity into the inter-group difference of the true positive rates. \citep{DBLP:conf/fat/HeidariLGK19} provided a moral framework to ground such a metric of equal opportunity. 

The \textbf{liberalism} school of thought argues that  individuals should be treated equally independently of outcomes~\cite{friedman1990free,nozick1974anarchy}. Equal treatment has also been defined as ``equal treatment-as-blindness" or neutrality~\cite{sunstein1992neutrality,miller1959myth}. From a technical perspective, the notion of \emph{equal treatment} has often been understood as \emph{equal outcomes} and translated to metrics such as demographic parity or statistical parity (used synonymously). 
As we will analyze in Section~\ref{sec:outcome-fairness}, equal outcomes imply that two demographic groups experience the same distribution of outcomes, even if the first of the two groups have much better prospects for achieving the predicted outcome. Thus, a model $f$ that achieves equal outcome may have to prefer individuals from one group over those from another group, violating the requirement for equal treatment of all individuals. Our metric for equal treatment remedy this drawback.

In Appendix~\ref{app:fair.notions}, we provide an illustrative use-case of when equal treatment is in general desired, but  neither equal opportunity nor equal outcomes can model this: scientific paper blind reviews.

\vspace{-0.3cm}
\subsection{Related Work}
\vspace{-0.3cm}
We briefly review related works below. See Appendix \ref{app:relatedWork} for an in-depth comparison of our work to existing research.

\textbf{Measuring Demographic Parity}. Previous work has relied on the notion of equal outcomes and measured DP on the model predictions using statistics such as Mann–Whitney, Kolmogorov-Smirnov or Wasserstein distance~\citep{DBLP:conf/fat/RajiSWMGHSTB20,DBLP:conf/icml/KearnsNRW18,DBLP:conf/nips/ChoHS20}.
Other research lines have aimed to measure DP when the protected attribute is a continuous variable~\citep{
DBLP:conf/iclr/JiangHFYMH22}. We measure DP by using a classifier two-sample test (see later on) of statistical independence.

\textbf{Classifier two-sample test (C2ST)}. We introduce a new classifier two-sample test (C2ST) to measure the independence of sets of random variables. While such a kind of approach has been previously explored by \cite{DBLP:conf/iclr/Lopez-PazO17}, the novelty of our proposal is to rely on AUC rather than accuracy, with the advantage of a direct application to the case of non-equal distribution of target labels -- more in   \ref{app:stat.independence.exp}.

\textbf{Explainability for fair supervised learning}.  \cite{lundberg2020explaining} apply Shapley values to statistical parity.
While there is a slight overlap with our work,  their extended abstract is not comparable to our paper wrt.\ objectives, breadth, and depth -- more in Appendix~\ref{app:relatedwork.Lundberg}. Other recent lines of work assume knowledge about causal relationships between random variables, such as~\cite {DBLP:conf/fat/GrabowiczPM22}. Our work does not rely on causal graphs knowledge but exploits the Shapley values' theoretical properties to obtain fairness model auditing guarantees.

\section{A Model for Monitoring Equal Treatment}
\subsection{Formalizing Equal Treatment}

To establish a criterion for equal treatment, we rely on the notion of explanation distributions.

\begin{definition}\textit{(Explanation Distribution)} An explanation function $\Ss:{\cal F}\times X\to \mathbb{R}^p$ 
maps a model $f_\theta \in {\cal F}$ and an input instance $x \in X$ into a vector of reals $\Ss(f_\theta, x) \in \mathbb{R}^p$. We extend it by mapping an
input distribution $\D_X$ into an (empirical) \textit{explanation distribution} as follows:
$\Ss(f_\theta, \D_X) = \{ \Ss(f_\theta, x)\ |\ x \in \D_X\} \subseteq \mathbb{R}^p$.
\end{definition}

We use Shapley values as an explanation function (cf.\ Appendix~\ref{app:foundation.xai}). In Appendix \ref{app:LIME}, we discuss the usage of LIME. Let us introduce next the new fairness notion of Equal Treatment, which considers the independence of the model's explanations with the membership to a~social~group.  

\begin{definition}\textit{(Equal Treatment (ET))} A model $f_\theta$ achieves ET if  $\Ss(f_\theta,X) \perp Z$.\label{def:et}
\end{definition}
Such a definition characterizes the philosophical notion of Equal Treatment by encoding the \enquote{treatment} performed by the model through the attribution of the importance of its input features.
As we will see later in Section~\ref{sec:theory}, Equal Treatment is a stronger notion than Demographic Parity since it not only requires that the distributions of the predictions are similar but that the processes of how predictions are made (i.e., the explanations) are also~similar.

\subsection{Equal Treatment Inspector}\label{sec:demographicParityInspector}
\begin{figure}[ht]
    \centering
    \includegraphics[width=\columnwidth]{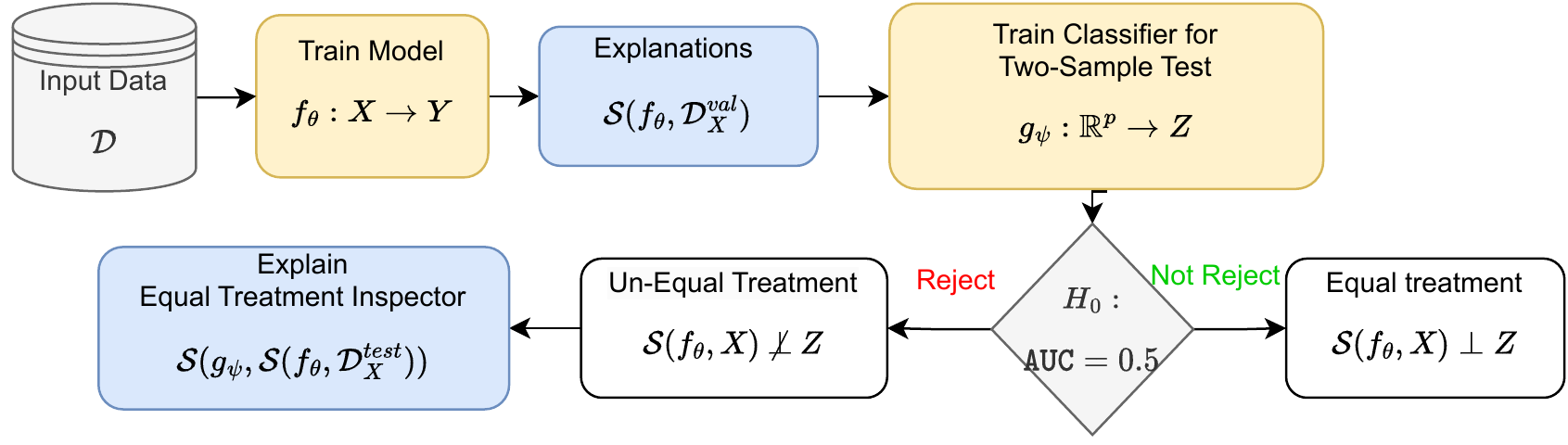}
    \caption{Equal Treatment Inspector workflow. The  model $f_\theta$ is learned based on training data, $\D^{tr} = \{(x_i,y_i)\}$,  and outputs the explanations $\Ss(f_\theta,\D_X^{val})$. The C2ST receives the explanations to predict the protected attribute, $Z$ on validation data $\D^{val}$. The AUC of the C2ST classifier $g_\psi$ on test data $\D^{te}$ decides for or against \emph{equal treatment}.  We can interpret the driver for unequal treatment on $g_\psi$ with explainable AI techniques.}
    \label{fig:workflow} 
\end{figure}

Our approach is based on the  properties of the Shapley values (cf. Appendix \ref{app:foundation.xai})  and on a novel classifier two-sample test. We split the available data into three parts $\Dd{tr},\Dd{val},\Dd{te} \subseteq X \times Y$. Here $\Dd{tr}$ is the training set of $f_\theta \in \cal F$ (not required if $f_\theta$ is already trained). Following the intuition above, $\Dd{val}$  is used to train another model $g_\psi$, called the \enquote{Equal Treatment Inspector}. Here, the predictive features are the explanation distribution $\Ss(f_\theta, \Dd{val}_{X\setminus{Z}})$ (excluding $Z$) and the target feature is the  protected feature $Z$. The model $g_\psi$ belongs to a family $\cal G$, possibly different from $\cal F$. The parameter $\psi$ optimizes a loss function $\ell$:
\begin{gather}\label{eq:fairDetector}
\psi = \argmin_{\tilde{\psi}} \sum_{(x, z) \in \Dd{val} } \ell( g_{\tilde{\psi}}(\Ss(f_\theta,x)) , z )
\end{gather}
To evaluate whether there is an equal treatment violation, we perform a statistical test of independence based on the AUC of $g_\psi$ on a test set $\Dd{te}$. We also use $\Dd{te}$ for testing the approach w.r.t.~baselines.  
Besides detecting fairness violations, a common desideratum is to understand what are the specific features driving such violations. The \enquote{Equal Treatment Inspector} $g_\psi$ can provide information on \textit{which features are the cause of the un-equal treatment} either by-design, if it is an interpretable model, or by applying post-hoc explainations techniques, e.g., Shapley values. 
See Figure \ref{fig:workflow} for a visualization of the whole workflow.

\section{Theoretical Analysis}\label{sec:theory}

Throughout this section, we assume an exact calculation of the Shapley values $\Ss(f_\theta, x)$ for an instance $x$, possibly for the  observational and interventional variants (see (\ref{def:val:obs},\ref{def:val:int}) in Appendix~\ref{app:foundation.xai}). In the experimental section, we will use non-IID data and non-linear models.


\subsection{Equal Treatment Given Shapley Values of Protected Attribute}\label{sec:expSpaceIndependence}

Can we measure ET by looking only at the Shapley value of the protected feature? The following result considers a linear model (with unknown coefficients) over \textit{independent} features. In such a very simple case, resorting to Shapley values leads to an exact test of both DP and ET, which turn out to coincide. In the following, we write $\mathit{distinct}(\D_X{}, i)$ for the number of distinct values in the $i$-th feature of dataset $\D_X{}$, and $\Ss(f_\beta, \D_X{})_i \equiv 0$ if the Shapley values of the $i$-th feature are all $0$'s.

\begin{lemma}\label{lemma:1} Consider a linear model $f_\beta(x) = \beta_0 + \sum_j \beta_j \cdot x_j$. Let $Z$ be the $i$-th feature, i.e. $Z = X_i$, and let $\D_X{}$ be such that $\mathit{distinct}(\D_X{}, i)>1$.
If the features in $X$ are independent, then $\Ss(f_\beta, \D_X{})_i \equiv 0 \Leftrightarrow f_\beta(X) \perp Z \Leftrightarrow \Ss(f_\beta,X) \perp Z$.
\end{lemma}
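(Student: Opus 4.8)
The plan is to collapse all three conditions onto the single algebraic statement $\beta_i = 0$. The one nontrivial ingredient is the classical closed form for Shapley values of a linear model under feature independence: for every instance $x$ and every feature $j$, $\Ss(f_\beta, x)_j = \beta_j\,(x_j - E[X_j])$, and moreover the observational and interventional value functions of Appendix~\ref{app:foundation.xai} coincide in this case, so the identity holds for either variant. I would first record this (it follows by substituting the value function $v(S) = \beta_0 + \sum_{j\in S}\beta_j x_j + \sum_{j\notin S}\beta_j E[X_j]$ into the Shapley formula and telescoping; under independence $E[X_j\mid X_S=x_S]=E[X_j]$ for $j\notin S$, which is what makes the two variants agree). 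From then on $\Ss(f_\beta,\cdot)_i$ is just the affine map $x \mapsto \beta_i(x_i - E[X_i])$.

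With this in hand the argument splits into three reductions. First, $\Ss(f_\beta,\D_X)_i \equiv 0 \Leftrightarrow \beta_i = 0$: the direction $\Leftarrow$ is immediate, and for $\Rightarrow$, if $\beta_i \neq 0$ then $\beta_i(x_i - E[X_i])=0$ forces $x_i = E[X_i]$ for all $x\in\D_X$, contradicting $\mathit{distinct}(\D_X,i)>1$ (two distinct values cannot both equal the reference constant — and this is insensitive to whether that constant is the population or the empirical mean). Second, $\beta_i = 0 \Leftrightarrow f_\beta(X)\perp Z$: if $\beta_i=0$ then $f_\beta(X)=\beta_0+\sum_{j\neq i}\beta_j X_j$ is a function of $X_{-i}=(X_j)_{j\neq i}$, which is independent of $X_i=Z$; conversely, by contraposition, if $\beta_i\neq0$ then by independence $E[f_\beta(X)\mid Z=z] = \beta_0 + \beta_i z + \sum_{j\neq i}\beta_j E[X_j]$ is a genuinely non-constant function of $z$ on the (at least two-point) support of $Z$, whereas a variable independent of $Z$ has constant conditional mean given $Z$, so $f_\beta(X)\not\perp Z$ (a $\Cov(f_\beta(X),Z)=\beta_i\Var(Z)$ computation also works under finite second moments). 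Third, $\beta_i = 0 \Leftrightarrow \Ss(f_\beta,X)\perp Z$: if $\beta_i=0$ the $i$-th coordinate $\Ss(f_\beta,X)_i=\beta_i(X_i-E[X_i])$ is a.s.\ constant while every other coordinate $\beta_j(X_j-E[X_j])$ depends on $X_j$ alone, so the whole vector is a function of $X_{-i}$ and hence independent of $Z$; conversely, if $\beta_i\neq0$ then $\Ss(f_\beta,X)_i=\beta_i(Z-E[X_i])$ is an injective affine image of $Z$, so it is not independent of $Z$, and therefore neither is the vector $\Ss(f_\beta,X)$ (independence of a random vector from $Z$ passes to each coordinate). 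Chaining the three reductions yields the stated equivalences.

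I would also note at the outset that the hypothesis $\mathit{distinct}(\D_X,i)>1$ is used twice and in two guises: on the empirical side it directly prevents $\Ss(f_\beta,\D_X)_i$ from vanishing merely because $\D_X$ is concentrated at $E[X_i]$, and it simultaneously guarantees that $Z=X_i$ is non-degenerate as a random variable (its support has at least two points), which is exactly what the converse directions of the second and third reductions need.

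The main obstacle is really the preliminary step: cleanly establishing the Shapley closed form and being explicit that it holds for both the observational and interventional variants under independence, so that the statement of the lemma is unambiguous. After that, everything is elementary probability bookkeeping; the only care points are the harmless population-versus-empirical-mean ambiguity in the first reduction, and making sure that in the converse directions I conclude genuine statistical dependence (non-constant conditional mean, or an injective functional relationship) rather than merely a difference of distributions.
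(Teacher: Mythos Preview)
Your proposal is correct and follows essentially the same route as the paper: both collapse all three conditions onto $\beta_i = 0$ via the closed form $\Ss(f_\beta,x)_j = \beta_j(x_j - E[X_j])$ for linear models with independent features, then read off the equivalences. Your version is more careful than the paper's in making the converse directions explicit (the paper simply writes ``by independence of $X$, this is equivalent to $f_\beta(X)\perp X_i$'' and ``by the propagation of independence, this is also equivalent to $\Ss(f_\beta,X)\perp Z$'' without spelling out why $\beta_i\neq 0$ forces dependence), but the underlying argument is the same.
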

\begin{proof}
It turns out $\Ss(f_\beta, x)_i = \beta_i \cdot (x_i - E[X_i])$. This holds in general for the interventional variant (\ref{def:val:int}), and assuming independent features, also for the observational
variant (\ref{def:val:obs})
\citep{DBLP:journals/ai/AasJL21}. Since $\mathit{distinct}(\D_X{}, i)>1$, we have that $\Ss(f_\beta, \D_X{})_i \equiv 0$ iff  $\beta_i = 0$. By independence of $X$, this is equivalent to $f_\beta(X) \perp X_i$, i.e., $f_\beta(X) \perp Z$. Moreover, by the propagation of independence, this is also equivalent to $\Ss(f_\beta,X) \perp Z$.
\end{proof}

However, the result does not extend to the case of dependent features. 

\begin{example}
Consider $Z = X_2 = X_1^2$, and the linear model $f_\beta(x_1, x_2) = \beta_0 + \beta_1 \cdot x_1$ with $\beta_1 \neq 0$ and $\beta_2 = 0$, i.e., the protected feature is not used by the model. In the interventional variant, the uninformativeness property implies that $\Ss(f_\beta, x)_2 = 0$. However, this does not mean that $Z = X_2$ is independent of the output because $f_\beta(X_1, X_2) = \beta_0 + \beta_1 \cdot X_1 \not \perp X_2$. In the observational variant, \cite{DBLP:journals/ai/AasJL21} show that:
\[
val(T) = \sum_{i \in N\setminus T} \beta_i \cdot E[X_i| X_T = x^{\star}_T] + \sum_{i \in T} \beta_i \cdot x^{\star}_i
\]
\noindent from which, we calculate:
$\Ss(f_\beta, x^{\star})_2 = \frac{\beta_1}{2} E[X_1|X_2 =x^{\star}_2]$.
We have $\Ss(f_\beta, \D_X)_2 \equiv 0$ iff $E[X_1|x_2 =x^{\star}_2] = 0$ for all $x^{\star}$ in $\D_X{}$. For the  marginal distribution $P(X_1=v) = 1/4$ for $v=1, -1, 2, -2$, and considering that $X_2=X_1^2$, it holds that $E[X_1|x_2 =v] = 0$ for all $v$. Thus $\Ss(f, \D_X)_2 \equiv 0$. However, again $f_\beta(X_1, X_2) = \beta_0 + \beta_1 \cdot X_1 \not \perp X_2$.
\end{example}
The counterexample shows that focusing only on the Shapley values of the protected feature is not a viable way to prove DP of a model -- and, a fortiori, neither to prove ET of the model, as will show in Lemma \ref{lemma:inc}.


\subsection{Equal Treatment vs Equal Outcomes vs Fairness of the Input}\label{subsec:ETvsEOvsInd}



\label{sec:outcome-fairness}
We start by observing that \emph{equal treatment} (independence of the explanation distribution from the protected attribute) is a sufficient condition for \emph{equal outcomes} measured as demographic parity (independence of the prediction distribution from the protected attribute).

\begin{lemma}\label{lemma:inc} If $\Ss(f_\theta, X) \perp Z$ then $f_\theta(X) \perp Z$.
\end{lemma}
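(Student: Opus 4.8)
The plan is to exploit the \emph{efficiency} (local accuracy) property of Shapley values, which states that for any instance $x$ the attributions sum to the prediction minus a fixed baseline. Writing $\Ss(f_\theta,x) = (\Ss(f_\theta,x)_1,\ldots,\Ss(f_\theta,x)_p)$, this property (recalled in Appendix~\ref{app:foundation.xai}, and valid for both the observational and interventional variants) gives
\[
\sum_{j=1}^{p}\Ss(f_\theta,x)_j = f_\theta(x) - \phi_0 ,
\]
where the base value $\phi_0$ (equal to $E[f_\theta(X)]$ in the observational variant, and to the analogous reference expectation in the interventional one) does not depend on $x$.

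From here the argument is short. First I would use this identity to exhibit the prediction as a deterministic function of the explanation vector: define $h:\mathbb{R}^p\to\mathbb{R}$ by $h(v) = \phi_0 + \sum_{j=1}^{p} v_j$, which is affine and hence (Borel) measurable. By the displayed identity, $f_\theta(x) = h(\Ss(f_\theta,x))$ for every $x$, so as random variables $f_\theta(X) = h(\Ss(f_\theta,X))$. Then I would invoke the standard fact that independence is preserved under measurable transformations of one of the arguments: since $\Ss(f_\theta,X) \perp Z$ by the hypothesis of equal treatment, it follows that $h(\Ss(f_\theta,X)) \perp Z$, i.e., $f_\theta(X) \perp Z$, which is exactly demographic parity.

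There is no genuine obstacle here; the one point to state with care rather than to prove is that the efficiency property holds in the exact form used — namely that $\phi_0$ is a true constant and the attributions sum to $f_\theta(x)$ — for whichever Shapley variant is assumed. This is a known property of both variants, so the lemma reduces to the observation that $f_\theta(X)$ is a fixed function of $\Ss(f_\theta,X)$ together with the elementary stability of independence under such functions. (The converse fails, as will be shown later, because distinct explanation vectors can map to the same prediction.)
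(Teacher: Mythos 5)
Your proof is correct and follows essentially the same route as the paper's: both invoke the efficiency property to write $f_\theta(X)$ as a constant plus the sum of the Shapley components, and then conclude by the preservation of independence under a (measurable, here affine) function of $\Ss(f_\theta,X)$. Your version merely makes the measurable map $h$ explicit, which is a harmless refinement of the same argument.
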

\begin{proof}
By the propagation of independence in probability distributions, the premise implies $(\sum_i \Ss_i(f_\theta, X) + c) \perp Z$ where $c$ is any constant. By setting $c=E[f(X)]$ and by the efficiency property (\ref{eq:eff}), we have the conclusion.
\end{proof}

Therefore, a DP violation (on the prediction distribution) is also a ET violation (in the explanation distribution). ET accounts for a stricter notion of fairness. The other direction does not hold. We can have dependence of $Z$ from the explanation features, but the sum of such features cancels out resulting in perfect DP on the prediction distribution. This issue is also known as Yule's effect~\citep{DBLP:conf/aaai/Ruggieri0PST23}.

\begin{example}\label{ex42} Consider the model $f(x_1, x_2) = x_1 + x_2$. Let $Z \sim Ber(0.5)$, $A \sim U(-3, -1)$, and $B \sim N(2, 1)$ be independent, and let us define:
\[ X_1 = A \cdot Z + B \cdot (1-Z) \quad X_2 = B \cdot Z + A \cdot (1-Z)\]
We have $f(X_1, X_2) = A + B \perp Z$ since $A, B, Z$ are independent.
Let us calculate $\Ss(f, X)$ in the two cases $Z=0$ and $Z=1$. If $Z=0$, we have $f(X_1, X_2) = B + A$, and then $\Ss(f, X)_1 = B-E[B] = B-2 \sim N(0, 1)$ and $\Ss(f, X)_2 = A-E[A] = A+2 \sim U(-1, 1)$. Similarly, for $Z=1$, we have $f(X_1, X_2) = A + B$, and then $\Ss(f, X)_1 = A-E[A]=A+2 \sim U(-1, 1)$ and $\Ss(f, X)_2 = B-E[B] = B-2 \sim N(0, 1)$. This shows:
\[ P(\Ss(f, X) | Z=0) \neq P(\Ss(f, X) | Z=1)\]
and then $\Ss(f, X) \not \perp Z$. Notice this example holds both for the interventional and the observational cases, as we exploited Shapley values of a linear model over independent features, namely $A, B, Z$.
\end{example}


Statistical independence between the input $X$ and the protected attribute $Z$, i.e.,  $X \perp Z$, is another fairness notion. It targets fairness of the (input) datasets, disregarding the model $f_\theta$. For fairness-aware training algorithms, which are able not to (directly or indirectly) rely on $Z$, violation of such a notion of fairness does not imply ET violation nor DP violation.

\begin{example}
Let $X = X_1,X_2,X_3$ be independent features 
such that $E[X_1] = E[X_2] = E[X_3] = 0$, and $X_1, X_2 \perp Z$, and $X_3 \not \perp Z$. The target feature is defined as $Y = X_1 + X_2$, hence it is also independent from $Z$. Assume a linear regression model $f_\beta(x_1, x_2,x_3) = \beta_1 \cdot x_1 + \beta_2 \cdot x_2 + \beta_3 \cdot x_3$ trained from a sample data from $(X, Y)$ with $\beta_1, \beta_2 \approx 1$ and $\beta_3 \approx 0$. Intuitively, this occurs when a number of features are collected to train a classifier without a clear understanding of which of them contributes to the prediction. It turns out that $X \not \perp Z$ but, for $\beta_3 = 0$ (which can be obtained by some fairness regularization method \citep{DBLP:conf/icdm/KamishimaAS11}), we have $f_\beta(X_1, X_2, X_3) = \beta_1 \cdot X_1 + \beta_2 \cdot X_2 \perp Z$. By reasoning as in the proof of Lemma \ref{lemma:1}, we have $\Ss(f_\beta, X) = (\beta_1 \cdot X_1, \beta_2 \cdot X_2, 0)$ and then $\Ss(f_\beta, X) \perp Z$. This holds both in the interventional and in the observational variants.
\end{example}

The above represents an example where the input data depends on the protected feature, but the model and the explanations are independent. 

\subsection{Equal Treatment Inspection via  Explanation Distributions}


\subsubsection{Statistical Independence Test via Classifier AUC Test}\label{sec:stat.independence}

In this subsection, we introduce a statistical test of independence based on the AUC of a binary classifier. The test of $W \perp Z$ is stated in general form for multivariate random variables $W$ and a binary random variable $Z$ with $dom(Z) = \{0, 1\}$. In the next subsection, we will instantiate it to the case $W = \Ss(f_\theta, X)$.

Let $\D = \{ (w_i, z_i) \}_{i=1}^n$ be a dataset of realizations of the random sample $(W, Z)^n \sim \mathcal{F}^n$ where $\mathcal{F}$ is unknown.
The independence $W \perp Z$ can be tested via a two-sample test. In fact,
we have $W \perp Z$ iff $P(W|Z)=P(W)$ iff $P(W|Z=1) = P(W|Z=0)$. We test whether the positives and negatives instances in $\D$ are drawn from the same distribution by a novel two-sample test, which does not require permutation of data nor equal proportion of positive and negatives as in \cite[Sections 2 and 3]{DBLP:conf/iclr/Lopez-PazO17}. We rely on a probabilistic classifier $f: W \rightarrow [0, 1]$, for which $f(w)$ estimates $P(Z=1|W=w)$, and on its AUC:
\begin{gather}\label{eq:auc}
AUC(f) = E_{\small(W,Z), (W',Z') \sim \mathcal{F}}[I( (Z-Z')(f(W)-f(W')) > 0) + \nicefrac{1}{2} \cdot I(f(W)=f(W')) | Z \neq Z'] 
\end{gather}

Under the null hypothesis $H_0: W \perp Z$, we have $AUC(f)=\nicefrac{1}{2}$. 
\begin{lemma}\label{lemma:test}
If $W \perp Z$ then  $AUC(f)=\nicefrac{1}{2}$ for any classifier $f$.
\end{lemma}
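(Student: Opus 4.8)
The plan is to show that independence of $W$ and $Z$ forces the AUC integrand to be symmetric under swapping the two sampled pairs, so that the conditional expectation in \eqref{eq:auc} equals $\nicefrac12$. First I would unpack the definition: $AUC(f)$ is the expectation, over two i.i.d.\ draws $(W,Z)$ and $(W',Z')$ from $\mathcal{F}$ conditioned on $Z\neq Z'$, of $I((Z-Z')(f(W)-f(W'))>0) + \nicefrac12\, I(f(W)=f(W'))$. Conditioning on $Z\neq Z'$ means exactly one of the two pairs has label $1$ and the other has label $0$.

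Next I would use the independence hypothesis $W\perp Z$. Under $H_0$, the conditional law of $W$ given $Z=1$ equals the conditional law of $W$ given $Z=0$ (both equal the marginal law of $W$). Hence, conditioned on $\{Z\neq Z'\}$, the pair $(W,W')$ has the same joint distribution as the pair $(W',W)$: in both orderings we have two independent draws from the marginal of $W$, and the labels are just a fair coin deciding which of the two carries $Z=1$. This exchangeability is the crux. Concretely, let $U=f(W)$, $U'=f(W')$, which are i.i.d.\ under $H_0$ conditioned on $Z\neq Z'$. Write $p = P((Z-Z')(U-U')>0 \mid Z\neq Z')$. Swapping the roles of the two pairs (which is distribution-preserving) turns the event $(Z-Z')(U-U')>0$ into $(Z'-Z)(U'-U)>0$, which is the same event; but splitting instead on the sign of $U-U'$ and using that $Z-Z'$ is $\pm 1$ with probability $\nicefrac12$ each independently of $(U,U')$, one gets $P((Z-Z')(U-U')>0\mid Z\neq Z', U\neq U') = \nicefrac12$ and likewise for the strict reverse inequality. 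Thus $p = \nicefrac12\,P(U\neq U'\mid Z\neq Z')$, and the tie term contributes $\nicefrac12\,P(U=U'\mid Z\neq Z')$, so the two add to $\nicefrac12$. Therefore $AUC(f)=\nicefrac12$.

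The main obstacle, if any, is purely bookkeeping around the tie term $I(f(W)=f(W'))$ and making the symmetry argument airtight when $f(W)$ has atoms: one must be careful that $Z-Z'$ (equivalently, the label of which pair is the ``positive'' one) is independent of $(f(W),f(W'))$ under $H_0$, which is precisely what $W\perp Z$ gives after conditioning on $Z\neq Z'$. Once that independence is isolated, the $\nicefrac12$ falls out by symmetry without any distributional assumptions on $f$ or on $\mathcal{F}$ beyond $W\perp Z$ — in particular no continuity of $f(W)$ is needed, which is why the tie-correction term appears in \eqref{eq:auc} in the first place.

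I would present this as a short direct computation: state that under $H_0$, conditioned on $Z\neq Z'$, the triple consisting of (the indicator of which pair is positive) and $(f(W),f(W'))$ factorizes with the first component uniform on $\{0,1\}$ and independent of the rest; then evaluate the conditional expectation of the integrand by conditioning first on $(f(W),f(W'))$, on each atom of which the integrand averages to exactly $\nicefrac12$ (cases $f(W)>f(W')$, $f(W)<f(W')$, $f(W)=f(W')$ all give $\nicefrac12$ after averaging over the label assignment). Taking the outer expectation preserves the value $\nicefrac12$, completing the proof.
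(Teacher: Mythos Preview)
Your argument is correct and takes a genuinely different route from the paper. The paper proves the lemma indirectly: it introduces the Bayes optimal scorer $f_{opt}(w)=P(Z=1\mid W=w)$, cites an external result to get the sandwich $AUC(f_{opt})\geq AUC(f)\geq 1-AUC(f_{opt})$, and then observes that under $W\perp Z$ the Bayes optimal scorer is constant, forcing $AUC(f_{opt})=\nicefrac12$ and hence $AUC(f)=\nicefrac12$ by the sandwich. You instead argue directly from symmetry: under $W\perp Z$ the four variables $W,Z,W',Z'$ are mutually independent, so conditioning on $Z\neq Z'$ leaves $(f(W),f(W'))$ unchanged and makes the sign of $Z-Z'$ a fair coin independent of $(f(W),f(W'))$; a three-case computation on the sign of $f(W)-f(W')$ then gives $\nicefrac12$ pointwise. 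Your approach is more elementary and self-contained (no appeal to the AUC-optimality of the Bayes classifier), and it makes transparent why the tie correction in \eqref{eq:auc} is exactly what is needed. The paper's approach, on the other hand, yields the sandwich inequality \eqref{eq:aucineq} as a by-product, which is of some independent interest even though it is not reused elsewhere in the paper. One small point worth stating explicitly in your write-up: the ``fair coin'' claim $P(Z=1,Z'=0\mid Z\neq Z')=\nicefrac12$ holds for \emph{any} marginal $P(Z=1)$, since $p(1-p)=(1-p)p$; this is what makes your argument go through without the equal-proportions assumption that the accuracy-based C2ST needs.
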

\begin{proof}
Let us recall the definition of the Bayes Optimal classifier $f_{opt}(w) = P(Z=1|W=w)$.
For any classifier $f$, we have:
\begin{gather}\label{eq:aucineq} 
AUC(f_{opt}) \geq AUC(f) \geq 1 - AUC(f_{opt})
\end{gather}
The first bound $AUC(f_{opt}) \geq AUC(f)$ follows because the Bayes Optimal classifier minimizes the Bayes risk~\citep{DBLP:conf/ijcai/GaoZ15}.
Assume the second bound does not hold, i.e., for some $f$ we have $AUC(f_{opt}) < 1 - AUC(f)$. Consider the classifier $\bar{f}(w) = 1-f(w)$. We have $AUC(\bar{f}) \geq 1-AUC(f)$, and then $\bar{f}$ would contradict the first bound because $AUC(f_{opt}) < AUC(\bar{f})$.

If $W \perp Z$, then $P(Z=1|W=w) = P(Z=1)$, and then $f_{opt}(w)$ is constant. By (\ref{eq:auc}), this implies $AUC(f_{opt})=\nicefrac{1}{2}$.
By (\ref{eq:aucineq}), this implies
$AUC(f)=\nicefrac{1}{2}$ for any classifier.
\end{proof}

As a consequence, any statistics to test $AUC(f)=\nicefrac{1}{2}$ can be used for testing $W \perp Z$. 
%
A classical choice is to resort to the Wilcoxon–Mann–Whitney test, which, however, assumes that the distributions of scores for positives and negatives have the same shape. Better alternatives include the Brunner–Munzel test \citep{DBLP:journals/csda/NeubertB07} and the Fligner–Policello test \citep{fligner1981robust}. The former is preferable, as the latter assumes that the distributions are symmetric.

\subsubsection{Testing for Equal Treatment via an Inspector}

We instantiate the previous AUC-based method for testing independence to the case of testing for Equal Treatment via an ET Inspector.

\begin{theorem}\label{thm:main}
Let $g_\psi: \Ss(f_\theta, X) \rightarrow [0,1]$ be an \enquote{Equal Treatment Inspector} for the model $f_{\theta}$, and $\alpha$ a significance level. We can test the null hypothesis $H_0: \Ss(f_\theta, X) \perp Z$ at $100 \cdot (1-\alpha)\%$ confidence level using a test statistics of $AUC(g_\psi) = \nicefrac{1}{2}$. 
\end{theorem}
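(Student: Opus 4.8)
The plan is to recognize that Theorem~\ref{thm:main} is essentially a direct instantiation of the general independence test established in Lemma~\ref{lemma:test}, applied with the multivariate random variable $W := \Ss(f_\theta, X)$ and the binary protected variable $Z$. So the proof is short and mostly a matter of assembling the pieces already in place. First I would set $W = \Ss(f_\theta, X)$ and observe that $g_\psi$, as defined in~(\ref{eq:fairDetector}), is precisely a probabilistic classifier $g_\psi \colon \mathrm{dom}(W) \to [0,1]$ of the kind assumed in Section~\ref{sec:stat.independence}, where $g_\psi(w)$ is meant to estimate $P(Z=1 \mid W=w)$. Hence the general setup of the AUC-based test applies verbatim.

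Second, I would invoke Lemma~\ref{lemma:test} with this instantiation: under the null hypothesis $H_0 \colon \Ss(f_\theta, X) \perp Z$, i.e. $W \perp Z$, we get $AUC(g_\psi) = \nicefrac{1}{2}$, and this holds for \emph{any} classifier $g_\psi$, regardless of the family $\cal G$ or how well it was fit. Therefore, under $H_0$, the statistic $AUC(g_\psi)$ has known expected value $\nicefrac{1}{2}$, and any valid test of the hypothesis ``$AUC(g_\psi) = \nicefrac{1}{2}$'' — e.g. the Brunner–Munzel test mentioned in the previous subsection, evaluated on the held-out set $\Dd{te}$ — is a valid level-$\alpha$ test of $H_0$. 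Rejecting ``$AUC = \nicefrac12$'' at level $\alpha$ thus rejects $\Ss(f_\theta, X) \perp Z$ at the $100\cdot(1-\alpha)\%$ confidence level, which is exactly the claim. I would also note the one-sided nature coming from~(\ref{eq:aucineq}): since $AUC(g_\psi)$ can only deviate from $\nicefrac12$ toward higher discriminability (after the $\bar f$ symmetrization argument in Lemma~\ref{lemma:test}), the appropriate test is effectively one-tailed on $|AUC - \nicefrac12|$.

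The main subtlety — and the place I would be most careful — is the gap between the population quantity $AUC(g_\psi)$ appearing in Lemma~\ref{lemma:test} and its empirical estimate on a finite test sample $\Dd{te}$, together with the fact that $g_\psi$ is itself trained on $\Dd{val}$. The clean logical structure is preserved precisely because $g_\psi$ is fit on $\Dd{val}$ and the AUC is estimated on the independent $\Dd{te}$, so $g_\psi$ may be treated as a fixed classifier when computing its AUC on $\Dd{te}$; Lemma~\ref{lemma:test}'s ``for any classifier $f$'' clause is exactly what licenses this without any assumption that $g_\psi$ is Bayes-optimal or even a good fit. I would state this explicitly: the three-way split $\Dd{tr}, \Dd{val}, \Dd{te}$ is what makes the conditional-on-$g_\psi$ argument rigorous, and the only remaining ingredient is that the chosen AUC-equality test (Brunner–Munzel) controls type-I error at level $\alpha$, which is a standard property of that test and not something this paper needs to reprove.
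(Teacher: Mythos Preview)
Your proposal is correct and follows essentially the same approach as the paper: instantiate Lemma~\ref{lemma:test} with $W = \Ss(f_\theta, X)$ and $f = g_\psi$ to conclude that $AUC(g_\psi) = \nicefrac{1}{2}$ under $H_0$, so any level-$\alpha$ test of $AUC(g_\psi)=\nicefrac{1}{2}$ serves as a level-$\alpha$ test of $H_0$. The paper's own proof is a one-liner doing exactly this; your additional remarks on the one-sided nature and the role of the $\Dd{val}/\Dd{te}$ split are correct elaborations that the paper leaves implicit.
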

\vspace{-0.3cm}
\begin{proof}
Under $H_0$, by Lemma \ref{eq:auc} with $W = \Ss(f_\theta, X)$ and $f = g_\psi$, we have $AUC(g_\psi) = \nicefrac{1}{2}$. 
\end{proof}
\vspace{-0.3cm}
Results of such a test can include $p$-values of the adopted test for $AUC(g_\psi) = \nicefrac{1}{2}$. Alternatively, confidence intervals for $AUC(g_\psi)$ can be reported, as returned by the Brunner–Munzel test or by the methods  \citep{delong1988comparing,DBLP:conf/nips/CortesM04,gonccalves2014roc}.

\subsubsection{Explaining Un-Equal Treatment}

The following example showcases one of our main contributions: detecting \textit{the sources} of un-equal treatment through interpretable by-design (linear) inspectors. Here, we assume that the model is also linear. In the Appendix \ref{app:xai.eval}, we will experiment with non-linear models.

\begin{example} Let $X = X_1,X_2,X_3$ be independent features 
such that $E[X_1] = E[X_2] = E[X_3] = 0$, and $X_1, X_2 \perp Z$, and $X_3 \not \perp Z$. 
Given a random sample of i.i.d.~observations from $(X, Y)$, a linear model $f_\beta(x_1, x_2, x_3) = \beta_0 + \beta_1 \cdot x_1 + \beta_2 \cdot x_2 + \beta_3 \cdot x_3$ can be built by OLS (Ordinary Least Square) estimation, possibly with $\beta_1, \beta_2, \beta_3 \neq 0$. By reasoning as in the proof of Lemma \ref{lemma:1}, $\Ss(f_\beta, x)_i = \beta_i \cdot x_i$. Consider now a linear ET Inspector $g_\psi(s) = \psi_0 + \psi_1 \cdot s_1 + \psi_2 \cdot s_2+\psi_3 \cdot s_3$, which can be written in terms of the $x$'s as: $g_\psi(x) = \psi_0 + \psi_1 \cdot \beta_1 \cdot x_1 + \psi_2 \cdot \beta_2 \cdot x_2+\psi_3 \cdot \beta_3 \cdot x_3$. By OLS estimation properties, we have $\psi_1 \approx cov(\beta_1 \cdot X_1, Z)/var(\beta_1 \cdot X_1) = cov(X_1, Z)/(\beta_1 \cdot var(X_1))  = 0$ and analogously $\psi_2 \approx 0$. Finally, $\psi_3 \approx cov(X_3, Z)/(\beta_3 \cdot var(X_3)) \neq 0$. In summary,  the coefficients of $g_\psi$ provide information about which feature contributes (and how much it contributes) to the dependence between the explanation $\Ss(f_\beta, X)$ and the protected feature $Z$. Notice that also $f_\beta(X) \not \perp Z$, but we cannot explain which features contribute to such a dependence by looking at $f_\beta(X)$, since $\beta_i \approx cov(X_i, Y)/var(X_i)$ can be non-zero also for $i = 1, 2$.
\end{example}

\section{Experimental Evaluation}
\label{sec:exp}

We perform measures of equal treatment by systematically varying the model $f$, its parameters $\theta$, and the input data distributions $\D_X$. We complement experiments described in this section by adding further experimental results in the Appendix that \emph{(i)} compare the different types of Shapley values estimation (Appendix~\ref{app:truemodeltruedata}), \emph{(ii)} add experiments on further natural datasets (Appendix~\ref{app:extra.experiments}), \emph{(iii)} exhibit a larger range of modeling choices (Appendix~\ref{subsec:EstimatorInspectorVariations}),\emph{(iv)} compare AUC vs accuracy for the C2ST independence test  (Appendix~\ref{app:stat.independence.exp}),\emph{(v)} extend the comparison against DP (Appendix~\ref{app:stat.DP.ET}) and \emph{(vi)} include LIME as an explanation method (Appendix~\ref{app:LIME}). 

We adopt \texttt{xgboost} \citep{DBLP:conf/kdd/ChenG16} for the model $f_\theta$, and logistic regression for the inspectors. We compare the AUC performances of several inspectors: $g_\psi$ (see Eq. \ref{eq:fairDetector}) for ET (see Def. \ref{def:et}), $g_v$  for DP (see Def. \ref{def:dp}), $g_\Upsilon$ for fairness of the input (i.e., $X \perp Z$ as discussed in Section \ref{subsec:ETvsEOvsInd}), and a combination  $g_\phi$ of the last two inspectors to test $f_\theta(X), X \perp Z$. These are the formal definitions:
\begin{gather}\nonumber
    \Upsilon = \argmin_{\tilde{\Upsilon}} \sum_{(x, z) \in \Dd{val}} \ell( g_{\tilde{\Upsilon}}(\textcolor{blue}{x}) , z) \quad \upsilon = \argmin_{\tilde{\upsilon}} \sum_{(x, z) \in \Dd{val}} \ell( g_{\tilde{\upsilon}}(\textcolor{blue}{f_\theta(x)}) , z )\\\nonumber
    \phi = \argmin_{\tilde{\phi}} \sum_{(x, z) \in \Dd{val}} \ell( g_{\tilde{\phi}}(\textcolor{blue}{{f_\theta(x),x}}) , z )
\end{gather}\label{eq:c2st.all.dist}
\vspace{-0.7cm}
\subsection{Experiments with Synthetic Data}\label{subsec:exp.synthetic}

We generate synthetic datasets by first drawing $10,000$ samples from normally distributed features $X_1 \sim N(0,1), X_2 \sim N(0,1), (X_3,X_4) \sim  N\left(\begin{bmatrix}0  \\ 0 \end{bmatrix},\begin{bmatrix}1 & \gamma \\ \gamma & 1 \end{bmatrix}\right)$. Then, we define a binary protected feature $Z$ with values $Z = 1$ if $X_4>0$ and $Z=0$ otherwise. We compare the methods and baselines while varying the correlation $\gamma = r(X_3, Z)$ from $0$ to $1$.
We define two experimental scenarios below. In both of them, the model $f_\beta$ is a function over the domain of the features $X_1, X_2, X_3$ only.

\textbf{Indirect Case: }\textit{Unfairness in the data and in the model.} We consider all of the three features in the dataset $X_1, X_2, X_3$. This gives rise to unfairness of the input  parameterized by $\gamma=r(X_3, Z)$. To generate DP violation in the model, we create the target $Y = \sigma(X_1 + X_2 + X_3)$, where $\sigma$ is the logistic function.

\textbf{Uninformative Case: }\textit{Unfairness in the data and fairness in the model.} The unfairness in the input data remains the same as in the previous case, while we now remove unfairness in the model.
The target feature is now defined as $Y = \sigma(X_1 + X_2)$. The $\gamma$ parameter controls unfairness in the dataset, which should not be captured by the model, since $X_1, X_2 \perp Z$ implies $Y \perp Z$ by propagation of independence.

\begin{figure*}[ht]
\centering
\includegraphics[width=.49\textwidth]{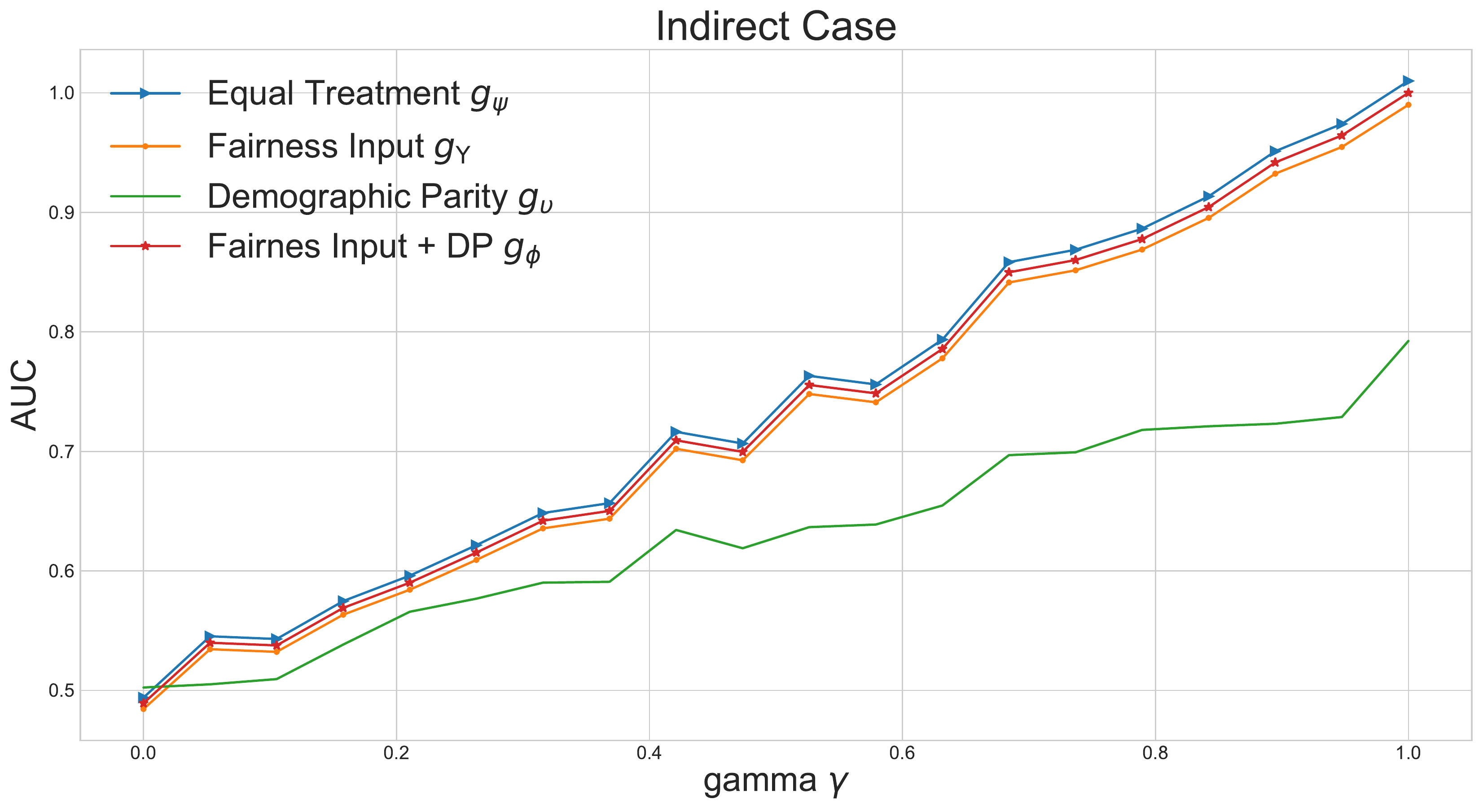}
\includegraphics[width=.49\textwidth]{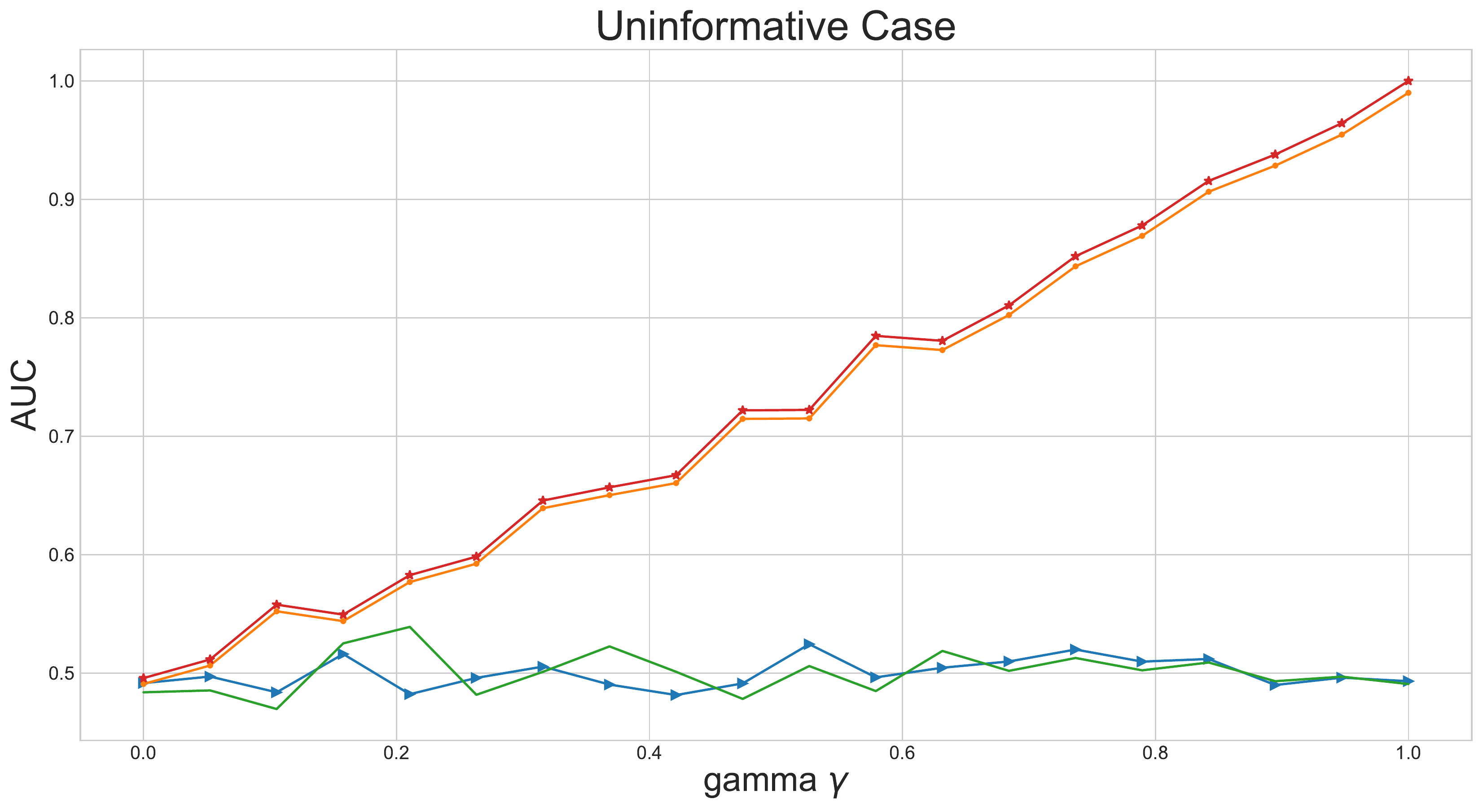}
\caption{In the \enquote{Indirect case} (left): good unfairness detection methods should follow a increasing steady slope to capture the fairness  violation; the DT inspector appears less sensitive due to the low dimensionality of its input. In the \enquote{Uninformative case} (right): good unfairness detection methods should remain constant with an AUC $\approx$ 0.5; the  inspectors based on input data ($g_\Upsilon$ and $g_\phi$) flag a false positive case of unfairness.}\label{fig:fairSyn}

\end{figure*}

In Figure \ref{fig:fairSyn}, we compare the AUC performances of the different inspectors on synthetic data split into $\nicefrac{1}{3}$ for training the model,  $\nicefrac{1}{3}$for training the inspectors and $\nicefrac{1}{3}$ for testing them. Overall, the ET inspector $g_{\psi}$ is able to detect unfairness in both scenarios. The DP inspector $g_v$ works fine in the indirect case, but it is not sensitive to  unfairness both in the data and in the model in the indirect case. Finally, the inspectors $g_\Upsilon$ and $g_\phi$ detect unfairness in the input but not in the model.
%
Further experiments are shown in Appendix~\ref{app:xai.eval} to investigate the contribution of the explanation distribution features, namely the $\Ss(f_\theta, x)_i$'s, to the ET inspector $g_{\psi}$.

\vspace{-0.5cm}
\subsection{Use Case: ACS US Income Data}\label{sec:experiments}
\vspace{-0.3cm}
\begin{figure*}[ht]
\centering
\includegraphics[width=.49\textwidth]{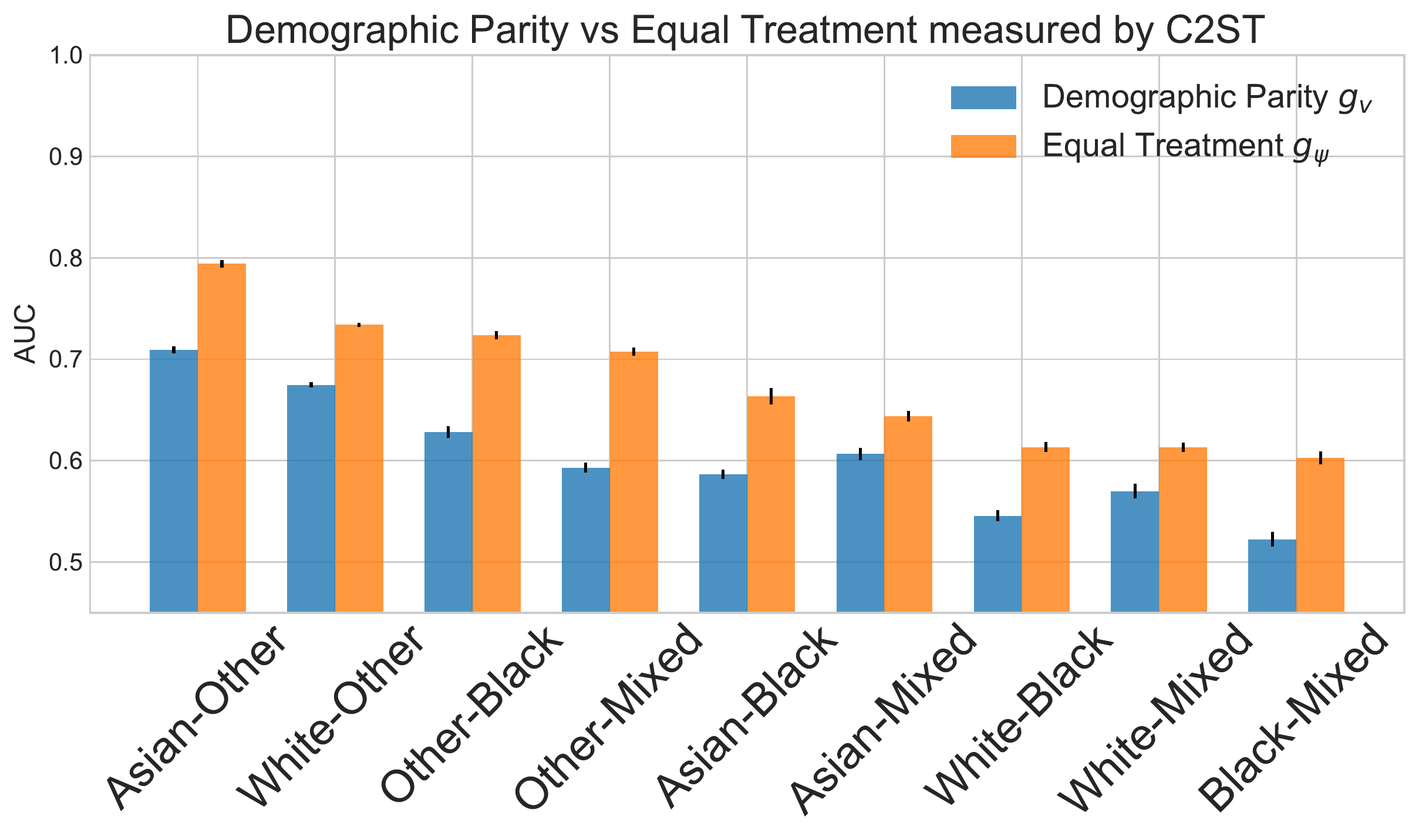}
\includegraphics[width=.49\textwidth]{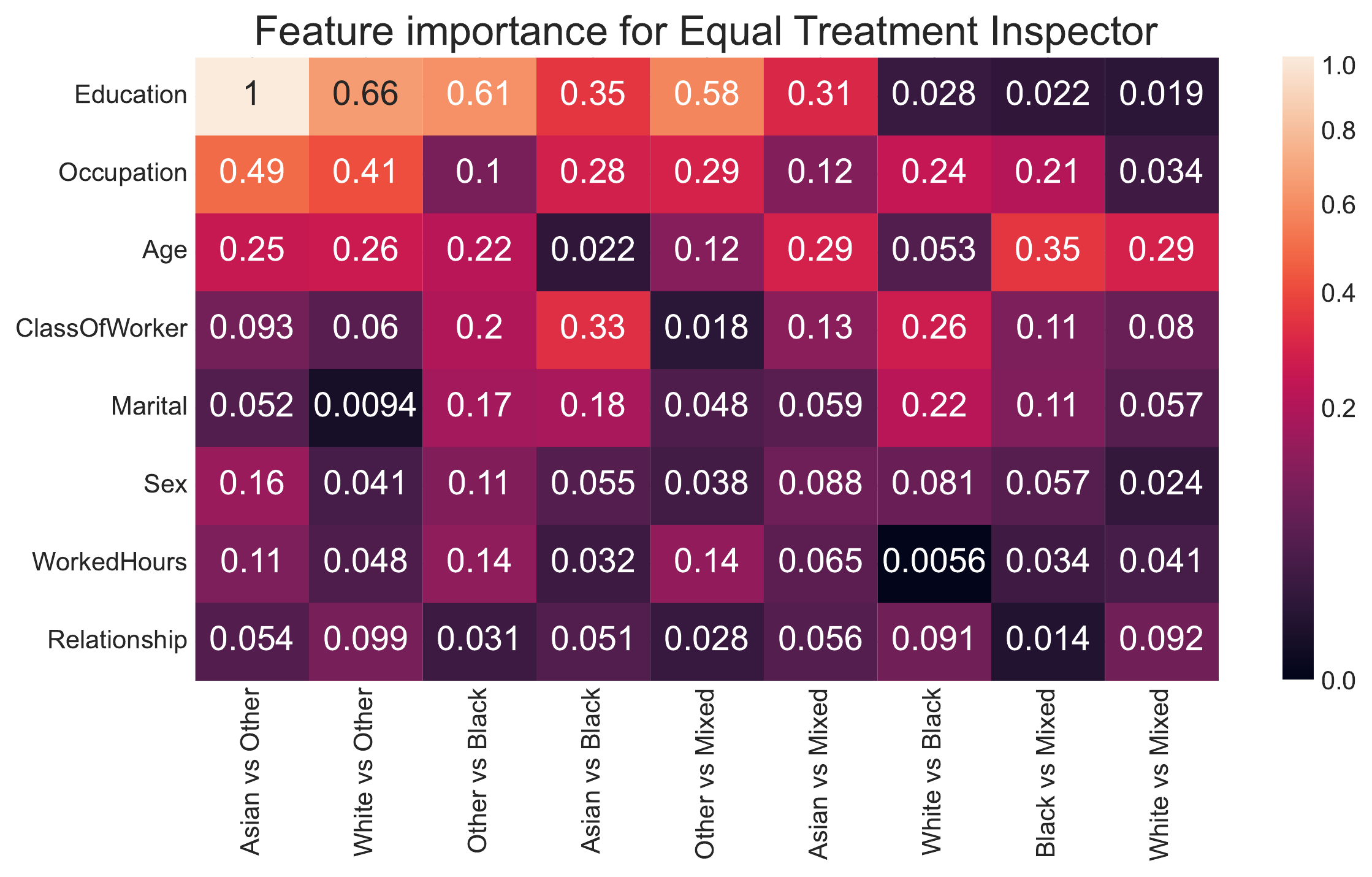}
\caption{In the left figure, a comparison of ET and DP measures on the US Income data. The AUC range for ET is notably wider, and aligning with the theoretical section, there are indeed instances where DP fails to identify discrimination that ET successfully detects. For a detailed statistical analysis, please refer to Appendix~\ref{app:stat.DP.ET}. Right figure provides insight into the influential features contributing to unequal treatment. Higher feature values correspond to a greater likelihood of these features being the underlying causes of unequal treatment.}\label{fig:xaifolks}
\end{figure*}

We experiment here with the ACS Income dataset\footnote{ACS PUMS documentation: \url{https://www.census.gov/programs-surveys/acs/microdata/documentation.html}} \citep{DBLP:conf/nips/DingHMS21}, and in the Appendix~\ref{app:extra.experiments} with three other ACS datasets. The fairness notions are tested against all pairs of groups from the protected attribute \enquote{Race}. Figure~\ref{fig:xaifolks} (left) shows the AUC performances of the ET inspector $g_{\psi}$ and the DT inspector $g_v$. The standard deviation of the AUC is calculated over $30$ bootstrap runs, each one splitting the data into $\nicefrac{1}{3}$ for training the model, $\nicefrac{1}{3}$ for training the inspectors and $\nicefrac{1}{3}$ for testing them. In the Appendix~\ref{app:stat.independence.exp}, the results of the C2ST test of Section~\ref{sec:expSpaceIndependence} are reported. The AUCs for the EP inspectors are greater than for the DP inspectors, as expected due to Lemmma \ref{lemma:inc}.

Figure~\ref{fig:xaifolks} (right) shows the Wasserstein distance between the coefficients of the linear regressor $g_{\psi}$ compared to a baseline where groups are assigned at random in the input dataset. This feature importance post-hoc explanation method provides insights into the impact of different features as sources of unfairness. We observe \enquote{Education} as a highly discriminatory proxy while the role of the feature \enquote{Worked Hours Per Week} is less relevant. This allows us to identify areas where adjustments or interventions may be needed to move closer to the ideal of equal treatment.



\vspace{-0.4cm}
\section{Conclusions}
\vspace{-0.3cm}
We introduced a novel approach for fairness in machine learning by measuring \emph{equal treatment}. While related work
reasoned over model predictions to measure  \emph{equal outcomes}, our notion of {equal treatment} is more fine-grained, accounting for the usage of attributes by the model via explanation distributions. Consequently, {equal treatment} implies {equal outcomes}, but the converse is not necessarily true, which we confirmed both theoretically and experimentally.

This paper also seeks to improve the understanding of how theoretical concepts of fairness from liberalism-oriented political philosophy align with technical measurements. Rather than merely comparing one social group to another based on disparities within decision distributions, our concept of equal treatment takes into account differences through the explanation distribution of all non-protected attributes, which often act as proxies for protected characteristics. Implications warrant further techno-philosophical discussions. Implications warrant further techno-philosophical discussions.


\textbf{Limitations:}
Political philosophical notions of distributive justice are more complex than we can account for in this paper.
Our research has focused on tabular data using Shapley values, which allow for theoretical guarantees but may differ from their computational approximations. 
It is possible that alternative AI explanation techniques, such as feature attribution methods, logical reasoning, argumentation, or counterfactual explanations, could be useful and offer their unique advantages to definitions of equal treatment. It is important to note that employing fair AI techniques does not necessarily ensure fairness in socio-technical systems based on AI, as stated in~\cite{DBLP:conf/fat/KulynychOTG20}.

\subsection*{Reproducibility Statement}\label{sec:reproducibility}
To ensure the reproducibility of our results, we make publicly available at \url{https://anonymous.4open.science/r/xAIAuditing-F6F9/README.md}: the data, the data preparation routines, the source code, and the code of experimental results.
Also, the open-source Python package \texttt{explanationspace} \url{https://anonymous.4open.science/r/explanationspace-B4B1/README.md} will be released. We use default \texttt{scikit-learn} parameters \citep{DBLP:journals/jmlr/PedregosaVGMTGBPWDVPCBPD11}, unless stated otherwise. 
Our experiments were run on a 4 vCPU server with 32 GB RAM.

\bibliography{references}
\bibliographystyle{iclr2024_conference}

\newpage
\appendix
\tableofcontents
\section{Definition and Properties of Shapley values}\label{app:foundation.xai}

Explainability has become an important concept in legal and ethical guidelines for data processing and machine learning applications ~\citep{selbst2018intuitive}. A wide variety of methods have been developed, aiming to account for the decision of algorithmic systems ~\citep{DBLP:journals/csur/GuidottiMRTGP19,DBLP:conf/fat/MittelstadtRW19,DBLP:journals/inffus/ArrietaRSBTBGGM20}. 
One of the most popular approaches to explainability in machine learning is Shapley values.

Shapley values are used to attribute relevance to features according to how the model relies on them ~\citep{DBLP:journals/natmi/LundbergECDPNKH20,DBLP:conf/nips/LundbergL17,DBLP:conf/ijcai/RozemberczkiWBY22}. Shapley values are a coalition game theory concept that aims to allocate the surplus generated by the grand coalition in a game to each of its players~\citep{shapley1997value}. 

For set of players $N = \{1, \ldots, p\}$, and a value function $\val:2^N \to \mathbb{R}$, the Shapley value $\mathcal{S}_j$ of the $j$'th player is defined as the average marginal contribution of player $j$ in all possibles coalitions of players:
\begin{small}
\[
\mathcal{S}_j = \sum_{T\subseteq N\setminus \{j\}} \frac{|T|!(p-|T|-1)!}{p!}(\val(T\cup \{j\}) - \val(T))
\]
\end{small}
In the context of machine learning models, players correspond to features $X_1, \ldots, X_p$, and the contribution of the feature $X_j$ is with reference to the prediction of a model $f$ for an instance $x^{\star}$ to be explained. Thus, we write $\mathcal{S}(f, x^{\star})_j$ for the Shapley value of feature $X_j$ in the prediction $f(x^{\star})$. We denote by $\mathcal{S}(f, x^{\star})$ the vector of Shapely values $(\mathcal{S}(f, x^{\star})_1, \ldots, \mathcal{S}(f, x^{\star})_p)$. 

There are two variants for the term $\val(T)$~\citep{DBLP:journals/ai/AasJL21,DBLP:journals/corr/abs-2006-16234,Zern2023Interventional}: the \textit{observational} and the \textit{interventional}. When using the observational conditional expectation, we consider the expected value of $f$ over the joint distribution of all features conditioned to fix features in $T$ to the values they have in $x^{\star}$:
\begin{equation}\label{def:val:obs}
\val(T) = E[f(x^{\star}_T, X_{N\setminus T})|X_T=x^{\star}_T]
\end{equation}
where $f(x^{\star}_T, X_{N\setminus T})$ denotes that features in $T$ are fixed to their values in $x^{\star}$, and features not in $T$ are random variables over the joint distribution of features.
Opposed, the interventional conditional expectation considers the expected value of $f$ over the marginal distribution of features not in $T$: 
\begin{equation}\label{def:val:int}
\val(T) = E[f(x^{\star}_T, X_{N\setminus T})]
\end{equation}
In the interventional variant, the marginal distribution is unaffected by the knowledge that $X_T=x^{\star}_T$. In general, the estimation of (\ref{def:val:obs}) is difficult, and some implementations (e.g., SHAP) actually consider (\ref{def:val:int}) as the default one. In the case of decision tree models, TreeSHAP offers both possibilities.

The Shapley value framework is the only feature attribution method that satisfies the properties of efficiency, symmetry, uninformativeness and additivity~\citep{molnar2019,shapley1997value,winter2002shapley,aumann1974cooperative}.  We recall next the key properties of efficiency and uninformativeness:

\paragraph{Efficiency.} Feature contributions add up to the difference of prediction for $x^{\star}$ and the expected value of $f$:
\begin{gather}\label{eq:eff}
    \sum_{j \in N} \Ss(f, x^{\star})_j = f(x^{\star}) - E[f(X)])
\end{gather}

The following property only holds for the interventional variant (e.g., for SHAP values), but not for the observational variant.
\paragraph{Uninformativeness.}
A feature $X_j$ that does not change the predicted value (i.e., for all $x, x'_j$: $f(x_{N\setminus \{j\}}, x_j) = f(x_{N\setminus \{j\}}, x'_j)$) has a Shapley value of zero, i.e., $\Ss(f, x^{\star})_j = 0$.

In the case of a linear model $f_\beta(x) = \beta_0 + \sum_j \beta_j \cdot x_j$, the SHAP values turns out to be $\Ss(f, x^{\star})_i = \beta_i(x^{\star}_i-\mu_i)$ where $\mu_i = E[X_i]$. For the observational case, this holds only if the features are independent \citep{DBLP:journals/ai/AasJL21}.

\section{Detailed Related Work}\label{app:relatedWork}

This section provides an in-depth review of the related theoretical work that informs our research. We contextualize our contribution within the broader field of explainable AI and fairness auditing. We discuss the use of fairness measures such as demographic parity, as well as explainability techniques like Shapley values and counterfactual explanations.

\subsection{Fairness Notions: Paper Blind Reviews Use Case}\label{app:fair.notions}


To illustrate the difference between equal opportunity, equal outcomes, and equal treatment, based on the previously discussed framework, we consider the example of conference papers' blind reviews and focus on the protected attribute of the country of origin of the paper's author, comparing Germany and the United Kingdom.

For \emph{equal opportunity}, we quantify fairness by the true positive rate (cf. Definition~\ref{def:eo}). In words, it is the acceptance ratio given that the quality of the paper is high. Achieving equal opportunity will imply that these ratios are similar between the two countries. In blind reviews, the purpose is to evaluate the paper's quality and the research's merit without being influenced by factors such as the author's identity, affiliations, background or country. If we were to enforce equal opportunity in this use case, we would aim for similar true positive rates for submissions from different countries. However, this approach could lead to unintended consequences, such as unintentionally favouring, reverse discrimination, overcorrection or quotas of affirmative action towards certain countries.

For \emph{equal outcomes}, we require that the distribution of acceptance rates is similar, independently of the quality of the paper (cf. Definition~\ref{def:dp}). Note that the outcomes can have similar rates due to random chance, even if there is a country bias in the acceptance procedure.

For \emph{equal treatment}, we require that the contributions of the features used to make a decision on paper's acceptance has similar distributions (cf. Definition~\ref{def:et}). Equality of treatment through blindness is more desirable than equal opportunity or equal outcomes because it ensures that all submissions are evaluated solely on the basis of their quality, without any bias or discrimination towards any particular country. By achieving equality of treatment through blindness, we can promote fairness and objectivity in the review process and ensure that all papers have an equal chance to be evaluated on their merits.

In comparing our introduced measure of \emph{equal treatment} with \emph{equal outcomes} (or demographic or statistical parity, used as synonymous), we note that the latter looks at the distributions of predictions and measures their similarity. Equal treatment goes a step further by evaluating whether the contribution of features to the decision, is similar. Our definition of \emph{equal treatment} implies the notion of \emph{equal outcome}, but the converse is not necessarily true, as we showed in Section \ref{sec:stat.independence}.

\subsection{Measuring Fairness}


Selecting a measure to compare fairness between two sensitive groups has been a highly discussed topic, where results such as~\citep{DBLP:journals/bigdata/Chouldechova17,DBLP:conf/nips/HardtPNS16,DBLP:conf/innovations/KleinbergMR17}, have highlighted the impossibility to satisfy simultaneously three type of fairness measures: demographic parity~\citep{DBLP:conf/innovations/DworkHPRZ12}, equalized odds~\citep{DBLP:conf/nips/HardtPNS16}, and predictive parity~\citep{DBLP:conf/kdd/Corbett-DaviesP17,DBLP:journals/corr/abs-2102-08453,wachter2020bias}.


Previous work has relied on measuring and calculating demographic parity on the model predictions~\citep{DBLP:conf/fat/RajiSWMGHSTB20,DBLP:conf/icml/KearnsNRW18}, or on the input data~\citep{DBLP:journals/cviu/FabbrizziPNK22,DBLP:conf/fat/YangQ0DR20,DBLP:conf/emnlp/ZhaoWYOC17}.  In this work, we perform equal treatment measures on the explanation distribution, which measures that each feature contributes equally to the prediction, which differs from the previous notions. 

In this work, we focus on  Equal Treatment (ET), as this fairness metric does not require a ground truth target variable, allowing for our method to work in its absence~\citep{DBLP:conf/aies/AkaBBGM21}, and under distribution shift conditions~\citep{DBLP:journals/corr/abs-2210-12369} where model performance metrics are not feasible to calculate~\citep{DBLP:conf/icml/GargBKL21,DBLP:conf/iclr/GargBLNS22,DBLP:conf/aaai/MouganN23}. Demographic Parity requires independence of the model's output from the protected features, written $f_\theta(X) \perp Z$, while Equal Treatment requires independence accross the feature attributions of the model $\Ss(f_\theta(X),X) \perp Z$.

\subsection{Explainability and fair supervised learning}\label{app:relatedwork.Lundberg}


The intersection of fairness and explainable AI has been an active topic in recent years. The work most close to our approach is~\cite{lundberg2020explaining} where Shapley values are aimed at testing for demographic parity.
This concise workshop paper emphasizes the importance of \enquote{decomposing a fairness metric among each of a model’s inputs to reveal which input features may be driving any observed fairness disparities}.
In terms of statistical independence, the approach can be rephrased as decomposing $f_\theta(X) \perp Z$ by examining $S(f_\theta,X)_i \perp Z$ for $i \in [1, p]$. 
Actually, the paper limits to consider difference in means, namely testing for $E[\Ss(f_\theta,X)_i|Z=1] \neq E[\Ss(f_\theta,X)_i|Z=0]$. Our approach goes beyond this, as we consider different distributions, and introduce the ET fairness notion for that. On the contrary, \cite{lundberg2020explaining} claims a decomposition method specific of DP. 
However, the decomposition method proposed is not sufficient nor necessary to prove DP, as showed next.

\begin{lemma}\label{lemma:lundberg}
$f_\theta(X) \perp Z$ is neither implied by nor it implies ($\Ss(f_\theta,X)_i \perp Z$ for $i \in [1, p]$).
\end{lemma}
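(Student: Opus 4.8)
The plan is to prove Lemma~\ref{lemma:lundberg} by exhibiting two separate counterexamples, one for each direction of implication. Both directions should follow from examples already developed in the excerpt, so the proof is mostly a matter of citing and re-interpreting them.

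\textbf{Direction 1: $f_\theta(X) \perp Z$ does not imply ($\Ss(f_\theta,X)_i \perp Z$ for all $i$).} Here I would simply invoke Example~\ref{ex42}. In that example the model $f(x_1,x_2) = x_1 + x_2$ satisfies $f(X_1,X_2) = A + B \perp Z$, yet the computation there shows $\Ss(f,X)_1$ has a $N(0,1)$ distribution when $Z=0$ and a $U(-1,1)$ distribution when $Z=1$, so $\Ss(f,X)_1 \not\perp Z$. Hence DP holds while the per-feature independence fails for $i=1$ (and symmetrically for $i=2$). This gives one half of the lemma with essentially no extra work.

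\textbf{Direction 2: ($\Ss(f_\theta,X)_i \perp Z$ for all $i$) does not imply $f_\theta(X) \perp Z$.} For this I would construct a small example where each individual Shapley coordinate is independent of $Z$ but their sum is not. The cleanest route: take a linear model $f_\beta(x) = \beta_1 x_1 + \beta_2 x_2$, so by the linear-model formula $\Ss(f_\beta,x)_i = \beta_i(x_i - \mu_i)$. I want each $X_i - \mu_i$ (equivalently each $X_i$) to be independent of $Z$, but $X_1 + X_2$ to depend on $Z$. Concretely, let $Z \sim Ber(1/2)$, let $U$ be independent of $Z$, and set $X_1 = U$, $X_2 = -U + h(Z)$ for some non-constant function $h$ — but then $X_2$ depends on $Z$. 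To make \emph{both} marginals independent of $Z$ while the sum is not, I would instead use a construction where the dependence hides in the joint: e.g.\ $X_1, X_2$ each marginally $N(0,1)$ and independent of $Z$, but with $\Cov(X_1,X_2 \mid Z=1) \neq \Cov(X_1,X_2 \mid Z=0)$, so that $\Var(X_1+X_2 \mid Z)$ depends on $Z$ and hence $f_\beta(X) = \beta_1 X_1 + \beta_2 X_2 \not\perp Z$. Since $\Ss(f_\beta,X)_i = \beta_i X_i$ depends only on the marginal of $X_i$, each $\Ss(f_\beta,X)_i \perp Z$. (Alternatively, I could lift Example~\ref{ex42}'s idea: swapping which component carries which distribution preserves marginals but not the joint.) One must check that the per-coordinate independence claim really only needs marginals — which it does, since $\Ss(f_\beta,X)_i$ is a deterministic function of $X_i$ alone — and that this holds for both the interventional and observational variants, which it does because the features used are independent of each other in the relevant conditioning (or one simply restricts the claim to the interventional/linear setting as the surrounding examples do).

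The main obstacle is Direction~2: finding a genuinely clean example where the marginals $X_i$ are each independent of $Z$ but $f_\beta(X)$ is not. The subtlety is that independence of $\Ss_i(f_\theta,X)$ from $Z$ for every $i$ does \emph{not} give joint independence of the vector $\Ss(f_\theta,X)$ from $Z$, and it is precisely the joint (copula) structure that controls whether the sum stays independent. I would make sure the chosen construction is explicit and self-contained (specifying a concrete joint law for $(X_1,X_2,Z)$), and note in a sentence that this is the same phenomenon — Yule's effect / the gap between marginal and joint independence — already flagged after Lemma~\ref{lemma:inc}. Once the two examples are in place, the proof is just: "Direction 1 follows from Example~\ref{ex42}; for Direction 2, take [the construction], where $\Ss(f_\beta,X)_i = \beta_i X_i \perp Z$ for each $i$ but $f_\beta(X) = \sum_i \beta_i X_i \not\perp Z$.''
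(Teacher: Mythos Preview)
Your proposal is correct, and for the direction ``$f_\theta(X)\perp Z$ does not imply $\Ss(f_\theta,X)_i\perp Z$ for all $i$'' you do exactly what the paper does: cite Example~\ref{ex42}.

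For the converse direction the paper takes a different, simpler route. Instead of a Gaussian pair with $Z$-dependent covariance, it uses a discrete example: $X_1,X_2\sim\mathrm{Ber}(0.5)$ independent, $Z=\mathbb{1}[X_1=X_2]$, and $f(x_1,x_2)=x_1-x_2$. Then each $X_i$ (hence each $\Ss(f,X)_i$) is marginally independent of $Z$, while $f(X)=X_1-X_2$ is determined by $Z$ (it is $0$ iff $Z=1$). The advantage of the paper's construction is that $X_1$ and $X_2$ are genuinely independent, so the linear-model Shapley formula $\Ss(f,x)_i=\beta_i(x_i-\mu_i)$ applies in \emph{both} the interventional and observational variants without caveat. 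In your covariance-switching construction the features are dependent (their joint law is a mixture of two bivariate normals), so the observational Shapley values are no longer simply $\beta_i X_i$; your hedge ``restrict to the interventional/linear setting'' is needed, whereas the paper's example avoids the issue entirely. Your idea is sound and illustrates the same marginal-vs.-joint independence phenomenon, but the paper's Bernoulli/XOR example is more economical and covers both variants at once.
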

\begin{proof}
Consider $f_\theta(X_1,X_2) = X_1 - X_2$ with $X_1,X_2 \sim \texttt{Ber}(0.5)$ and $Z=1$ if $X_1=X_2$, and $Z=0$ otherwise. Hence $Z\sim \texttt{Ber}(0.5)$. We have $\Ss(f_\theta,X_1) = X_1 \perp Z$ and $\Ss(f,X_2) = -X_2 \perp Z$. However, $f_\theta(X_1,X_2) = X_1-X_2$ does not satisfy $f_\theta(X_1,X_2) \perp Z$, e.g., $P(Z=0|f_\theta(X_1,X_2)=0) = P(Z=0|X_1-X_2=0) = 1$. Example \ref{ex42} illustrates a case where $f_\theta(X) \perp Z$ yet $\Ss(f_\theta, X)_1$ and $\Ss(f_\theta,X)_2$ are not independent of $Z$. 
\end{proof}

Our approach to ET considers the independence of the \textit{multivariate} distribution of $\Ss(f,X)$ with respect to $Z$, rather than the independence of each marginal distribution $\Ss(f_\theta, X)_i \perp Z$. With such a definition, we obtain a sufficient condition for DP, as shown in Lemma \ref{lemma:inc}.

~\cite{DBLP:conf/ssci/StevensDVV20} presents an approach based on adapting the Shapley value function to explain model unfairness. They also introduce a new meta-algorithm  that considers the problem of learning an additive perturbation to an existing model in order to impose fairness. 
In our work, we do not adopt the Shapley value function. Instead, we use the theoretical Shapley properties to provide fairness auditing guarantees. Our \enquote{Equal Treatment Inspector} is not perturbation-based but uses Shapley values to project the model to the explanation distribution, and then measures \emph{un-equal treatment}. It also allows us to pinpoint what are the specific features driving this violation.

\cite{DBLP:conf/fat/GrabowiczPM22} present a post-processing method based on Shapley values aiming to detect and nullify the influence of a protected attribute on the output of the system. For this, they assume there are direct causal links from the data to the protected attribute and that there are no measured confounders. Our work does not use causal graphs but exploits the theoretical properties of the Shapley values to obtain fairness model auditing guarantees.

A few works have researched fairness using other explainability techniques such as counterfactual explanations \citep{DBLP:conf/nips/KusnerLRS17,DBLP:conf/cogmi/ManerbaG21,DBLP:conf/aies/MutluYG22}.
We don't focus on counterfactual explanations but on feature attribution methods that allow us to measure unequal feature contribution to the prediction. Further work can be envisioned by applying explainable AI techniques to the \enquote{Equal Treatment Inspector} or constructing the explanation distribution out of other techniques.

\subsection{Classifier Two-Sample Test (C2ST)}
The use of classifiers as a statistical tests of independence $W \perp Z$ for a binary $Z$ has been previously explored in the literature \citep{DBLP:conf/iclr/Lopez-PazO17}. The approach relies on testing accuracy of a classifier trained to distinguish $Z=1$ (positives) from $Z=0$ (negatives) given $W=w$. 
In the null hypothesis that the distributions of positives and negatives are the same, no classifier is better than a random answer with accuracy $\nicefrac{1}{2}$. This assumes equal proportion of instances of the two distributions in the training and test set. 
Our approach builds on this idea, but it considers testing the AUC instead of the accuracy. Thus, we remove the assumption of equal proportions\footnote{For unequal proportions, one can consider the accuracy of the majority class, but this still make the requirement to know the true proportion of positives and negatives.}. We also show in Section \ref{app:stat.independence.exp} that using AUC may achieve a better power than using accuracy.

\cite{DBLP:conf/icml/LiuXL0GS20} propose a kernel-based approach to two-sample
tests classification. 
Alike work has also been used in Kaggle competitions under the name of \enquote{Adversarial Validation}~\citep{kaggleAdversarial,howtowinKaggle}, a technique which aims to detect which features are distinct between train and leaderboard datasets to avoid possible leaderboard shakes.

\cite{DBLP:journals/corr/EdwardsS15} focuses on removing statistical parity from images by
using an adversary that tries to predict the relevant sensitive variable from the model representation and censoring the learning of the representation of the model and data on images and neural networks. While methods for images or text data are often developed specifically for neural networks and cannot be directly applied to traditional machine learning techniques, we focus on tabular data where techniques such as gradient boosting decision trees achieve state-of-the-art model performance \citep{DBLP:conf/nips/GrinsztajnOV22,DBLP:journals/corr/abs-2101-02118,BorisovNNtabular}. Furthermore, our model and data projection into the explanation distributions leverages Shapley value theory to provide fairness auditing guarantees. In this sense, our work can be viewed as an extension of their work, both in theoretical and practical applications.

\section{True to the Model or True to the Data?}\label{app:truemodeltruedata}

Many works discuss the application of Shapley values for feature attribution in ML models~\citep{DBLP:journals/kais/StrumbeljK14,DBLP:journals/natmi/LundbergECDPNKH20,DBLP:conf/nips/LundbergL17,lundberg2018explainable}. 
However, the correct way to connect a model to a coalitional game, which is the central concept of Shapley values, is a source of controversy, with two main approaches: an interventional \citep{DBLP:journals/ai/AasJL21,DBLP:conf/nips/FryeRF20,Zern2023Interventional}, and an observational formulation of the conditional expectation, see (\ref{def:val:obs},\ref{def:val:int}) \citep{DBLP:conf/icml/SundararajanN20,DBLP:conf/sp/DattaSZ16,DBLP:journals/corr/abs-1911-00467}.

In the following experiment, we compare the impact of the two approaches on our \enquote{Equal Treatment Inspector}. We benchmark this experiment on the four prediction tasks based on the US census data~\citep{DBLP:conf/nips/DingHMS21} and use linear models for both the $f_\theta(X)$ and $g_\psi(\Ss(f_\theta,X))$.   We calculate the two variants of Shapley values using the SHAP linear explainer.\footnote{\url{https://shap.readthedocs.io/en/latest/generated/shap.explainers.Linear.html}}
The comparison will be parametric to a feature perturbation hyperparameter. The interventional SHAP values break the dependence structure between features in the model to uncover how the model would behave if the inputs are changed (as it was an intervention). 
This option is said to stay \enquote{true to the model} meaning it will only give allocation credit to the features that are actually used by the model.
On the other hand, the full conditional approximation of the SHAP values respects the correlations of the input features. If the model depends on one input that is correlated with another input, then both get some credit for the model’s behaviour. 
This option is said to say \enquote{true to the data}, meaning that it only considers how the model would behave when respecting the correlations in the input data~\citep{DBLP:journals/corr/abs-2006-16234}.
We will measure the difference between the two approaches by looking at the AUC and at the linear coefficients of the inspector $g_\psi$, for this case only for the pair White-Other. In Table \ref{tab:t2mt2d.auc} and Table \ref{tab:t2mt2d.coefficients}, we can see that differences in AUC and coefficients are negligible.

\begin{table}[ht]
\centering
\caption{AUC comparison of the \enquote{Equal Treatment Inspector} between estimating the Shapley values between the interventional and the observational approaches for the four prediction tasks based on the US census dataset. The $\%$ column is the relative difference.}\label{tab:t2mt2d.auc}
\begin{tabular}{l|llc}
           & Interventional                                                          & Correlation & \%           \\ \hline
Income      & 0.736438                                                                & 0.736439              & 1.1e-06 \\
Employment  & 0.747923                                                                & 0.747923              & 4.44e-07 \\
Mobility    & 0.690734                                                                & 0.690735              & 8.2e-07 \\
Travel Time & 0.790512 & 0.790512              & 3.0e-07
\end{tabular}
\end{table}

\begin{table}[ht]
\caption{Linear regression coefficients comparison of the \enquote{Equal Treatment Inspector} between estimating the Shapley values between the interventional and the observational approaches for the ACS Income prediction task. The $\%$ column is the relative difference.}\label{tab:t2mt2d.coefficients}
\centering
\begin{tabular}{l|rrr}

                & \multicolumn{1}{l}{Interventional} & \multicolumn{1}{l}{Correlation} & \multicolumn{1}{c}{\%} \\ \hline
Marital         & 0.348170                           & 0.348190                        & 2.0e-05                 \\
Worked Hours    & 0.103258                           & -0.103254                       & 3.5e-06                 \\
Class of worker & 0.579126                           & 0.579119                        & 6.6e-06                 \\
Sex             & 0.003494                           & 0.003497                        & 3.4e-06                 \\
Occupation      & 0.195736                           & 0.195744                        & 8.2e-06                 \\
Age             & -0.018958                          & -0.018954                       & 4.2e-06                 \\
Education       & -0.006840                          & -0.006840                       & 5.9e-07                 \\
Relationship    & 0.034209                           & 0.034212                        & 2.5e-06                

\end{tabular}
\end{table}

\section{Experiments on datasets derived from the US Census}\label{app:extra.experiments}

In the main body of the paper, we considered the  ACS Income dataset. Here, we experiment with additional datasets derived from the US census database \citep{DBLP:conf/nips/DingHMS21}: ACS Travel Time, ACS Employment and ACS Mobility. We compare fairness of the prediction tasks for pairs of protected attribute groups over the California 2014 district data.

We follow the same methodology as in the experimental Section \ref{sec:experiments}. 
%
The choice of \texttt{xgboost} \citep{DBLP:conf/kdd/ChenG16} for the model
$f_\beta$ is motivated as it achieves state-of-the-art 
performance~\cite{DBLP:conf/nips/GrinsztajnOV22,DBLP:journals/corr/abs-2101-02118,BorisovNNtabular}. The choice of logistic regression for the inspector $g_\psi$ is motivated by its direct interpretability.

\subsection{ACS Employment}
The goal of this task is to predict whether an individual, is employed. Figure \ref{fig:xai.employment} shows a low
DP violation, compared to the other prediction tasks based on the US census dataset. The AUC of the \enquote{Equal Treatment Inspector} is ranging from $0.55$ to $0.70$. For Asian vs Black un-equal treatment we see that there significant variation of the AUC, indicating that the method achieves different values on the bootstrapping folds. 
Looking at the features driving the ET violation, we see particularly high values when comparing \enquote{Asian} and \enquote{Black} populations, and for features \enquote{Citizenship} and \enquote{Employment}. 
On average, the most important features across all group comparisons are also \enquote{Education} and \enquote{Area}. Interestingly, features such as \enquote{difficulties on the hearing or seeing}, do not play a role.

\begin{figure*}[ht]
\centering
\includegraphics[width=.49\textwidth]{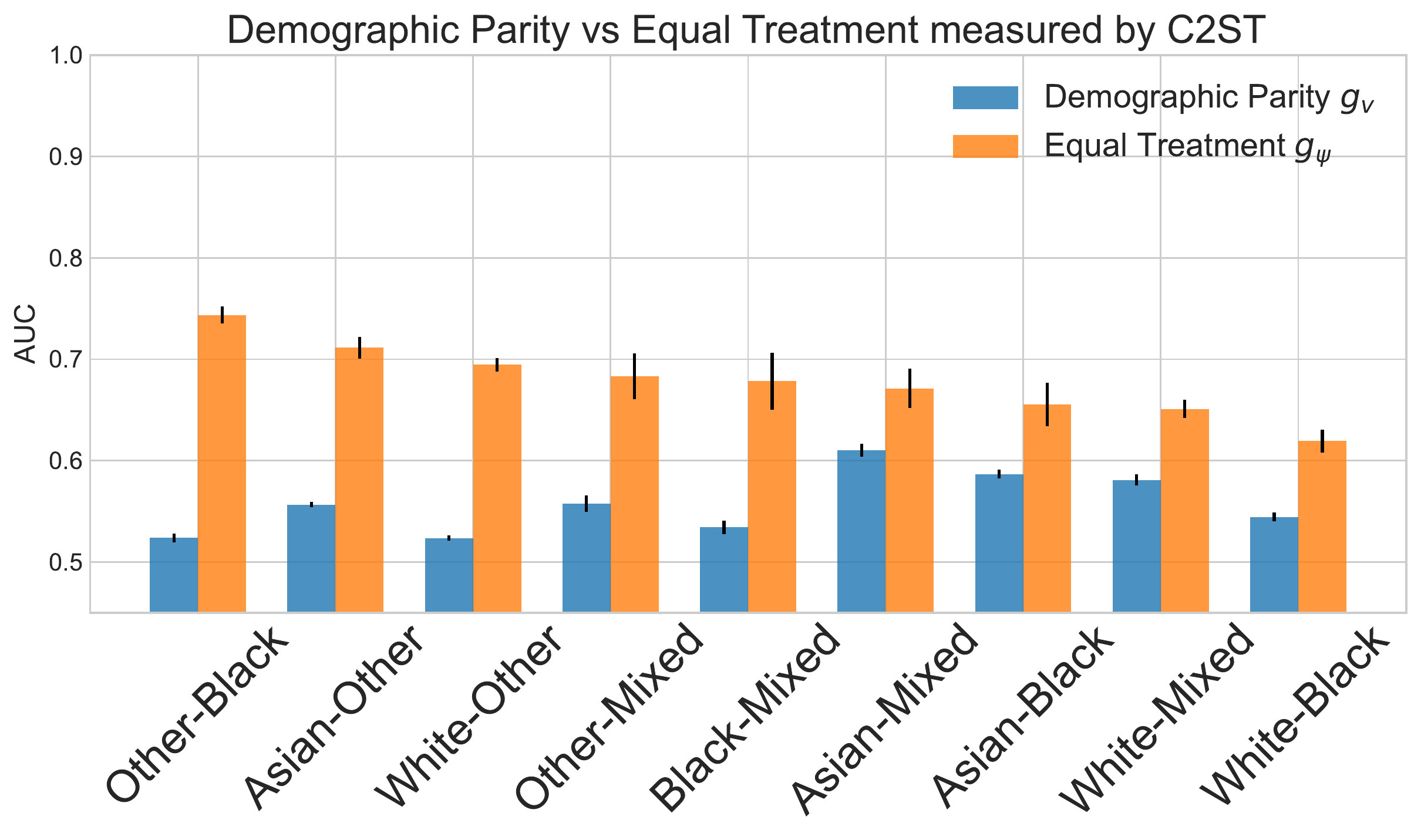}\hfill
\includegraphics[width=.49\textwidth]{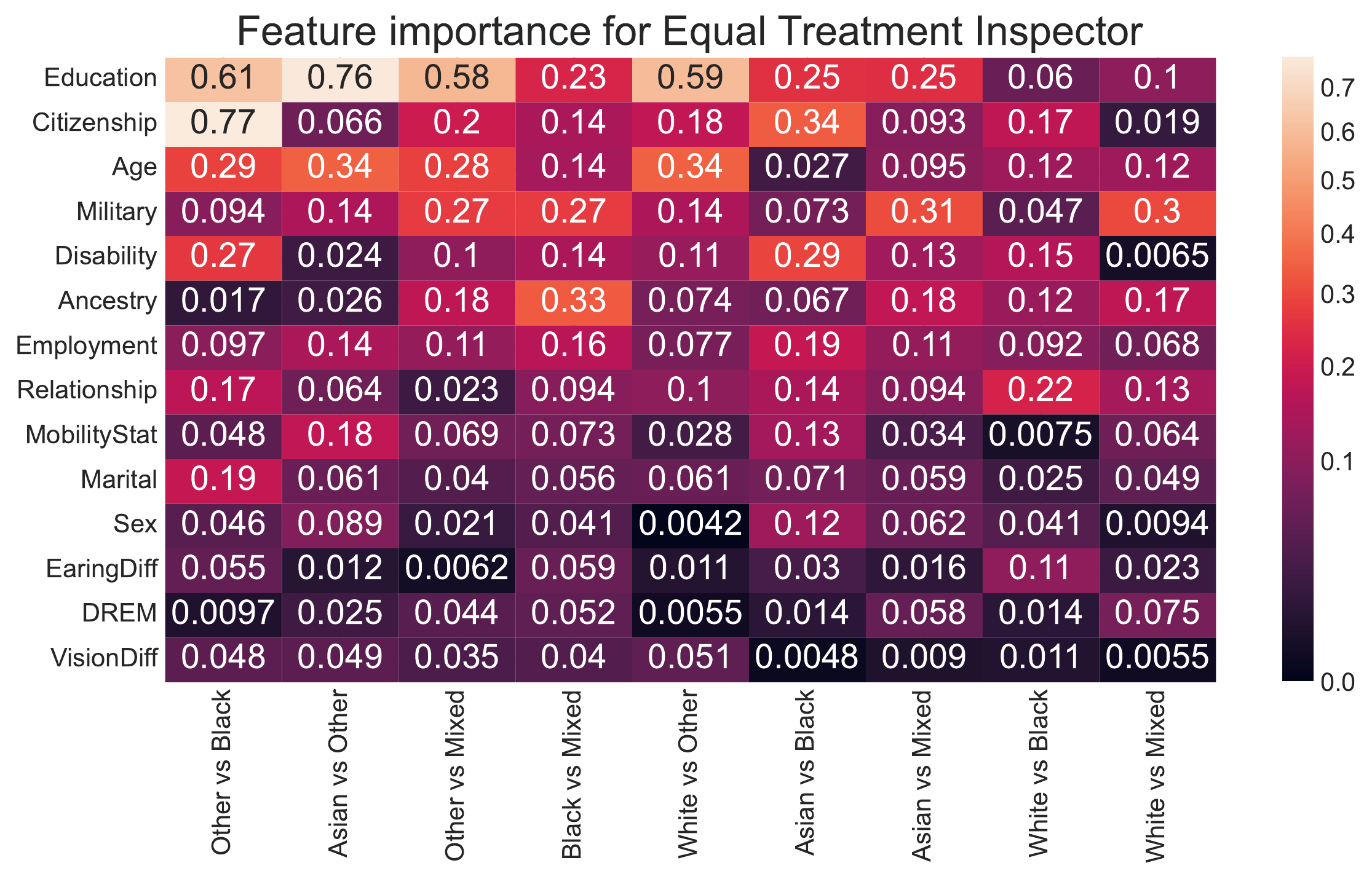}
\caption{Left: AUC of the inspector for ET and DP, over the district of California 2014 for the ACS Employment dataset. Right: contribution of features to the ET inspector performance.}
\label{fig:xai.employment}
\end{figure*}

\subsection{ACS Travel Time}

The goal of this task is to predict whether an individual has a commute to work that is longer than 20 minutes. The threshold of 20 minutes was chosen as it is the US-wide median travel time to work based on 2018 data. Figure \ref{fig:xai.traveltime} shows an AUC for the ET inspector in the range of $0.50$ to $0.60$. 
By looking at the features, they highlight different ET drivers depending on the pair-wise comparison made. 
In general, the feature \enquote{Education}, \enquote{Citizenship} and \enquote{Area} are the those with the highest difference. Even though for Asian-Black pairwise comparison \enquote{Employment} is also one of the most relevant features.

\begin{figure*}[ht]
\centering
\includegraphics[width=.49\textwidth]{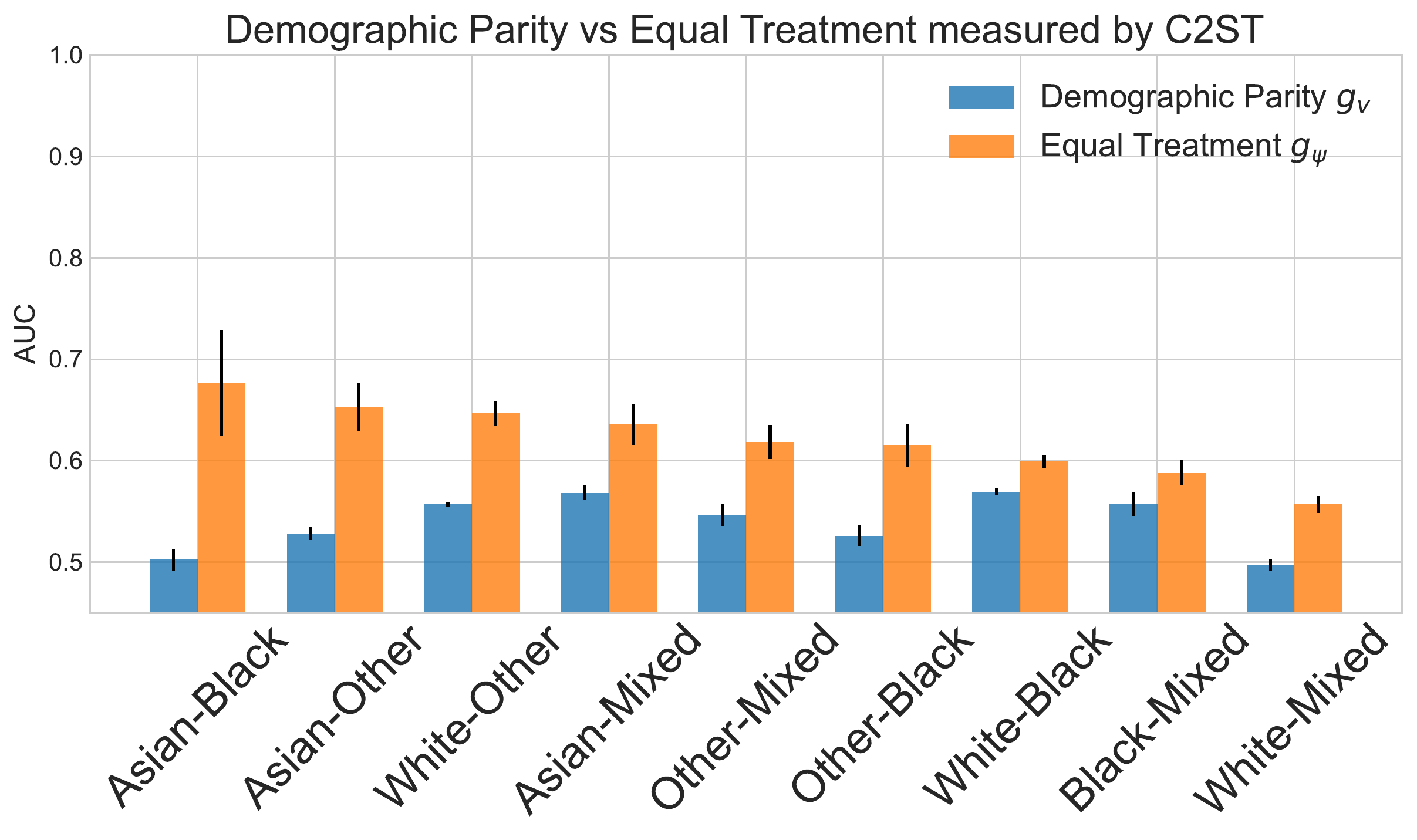}\hfill
\includegraphics[width=.49\textwidth]{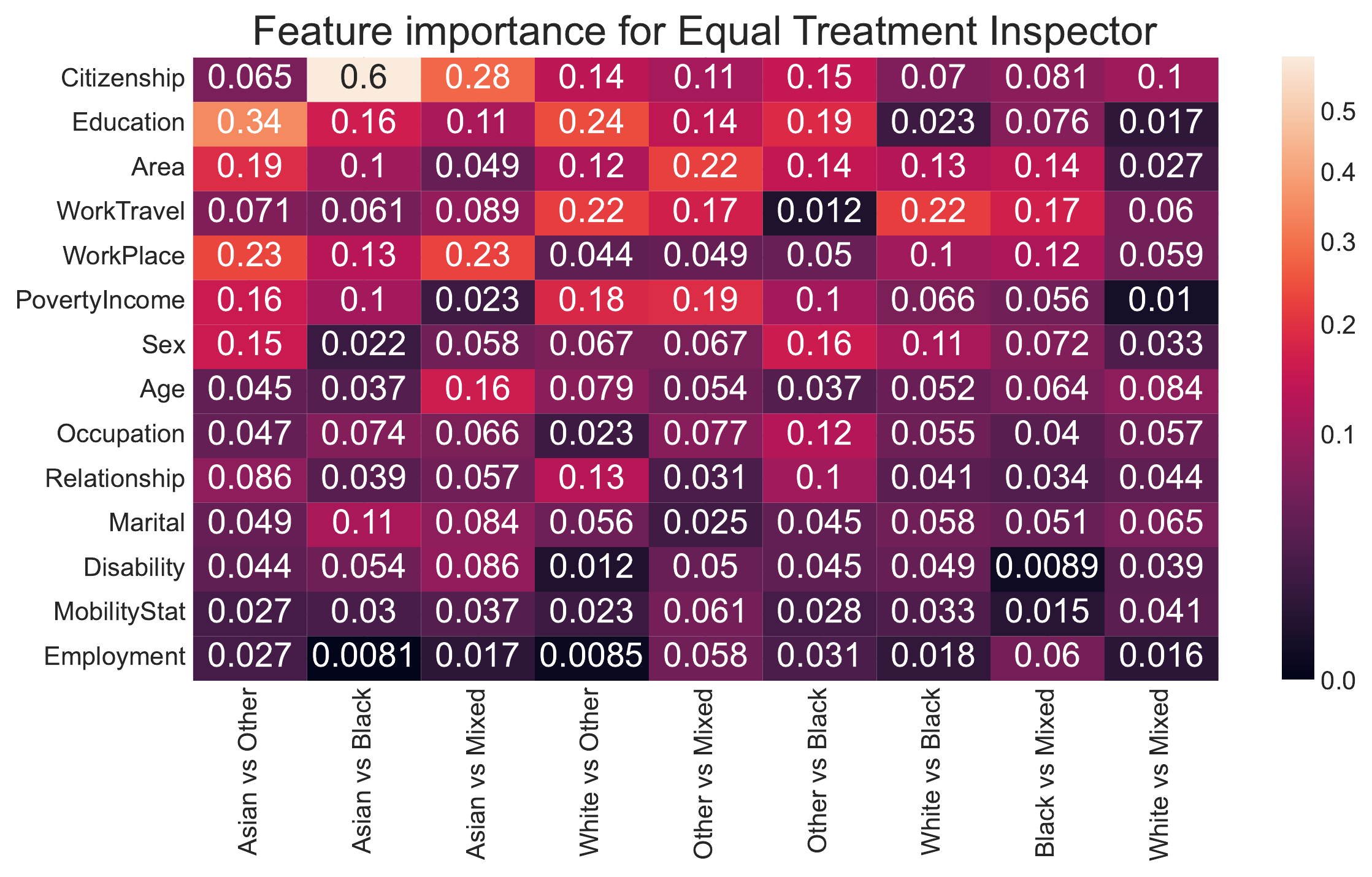}
\caption{Left: AUC of the inspector for ET and DP, over the district of California 2014 for the ACS Travel Time dataset. Right: contribution of features to the ET inspector performance.}
\label{fig:xai.traveltime}
\end{figure*}

\subsection{ACS Mobility}

The goal of this task is to predict whether an individual had the same residential address one year ago, only including individuals between the ages of 18 and 35. This filtering increases the difficulty of the prediction task, as the base rate of staying at the same address is above $90\%$ for the general population~\citep{DBLP:conf/nips/DingHMS21}. Figure \ref{fig:xai.mobility} show an AUC of the ET inspector in the range of $0.55$ to $0.80$. 
By looking at the features, they highlight different  source of the ET violation depending on the group pair-wise comparison. 
In general the feature \enquote{Ancestry}, i.e. ``ancestors' lives with details like where they lived, who they lived with, and what they did for a living", plays a high relevance when predicting ET violation. 

\begin{figure*}[ht]
\centering
\includegraphics[width=.49\textwidth]{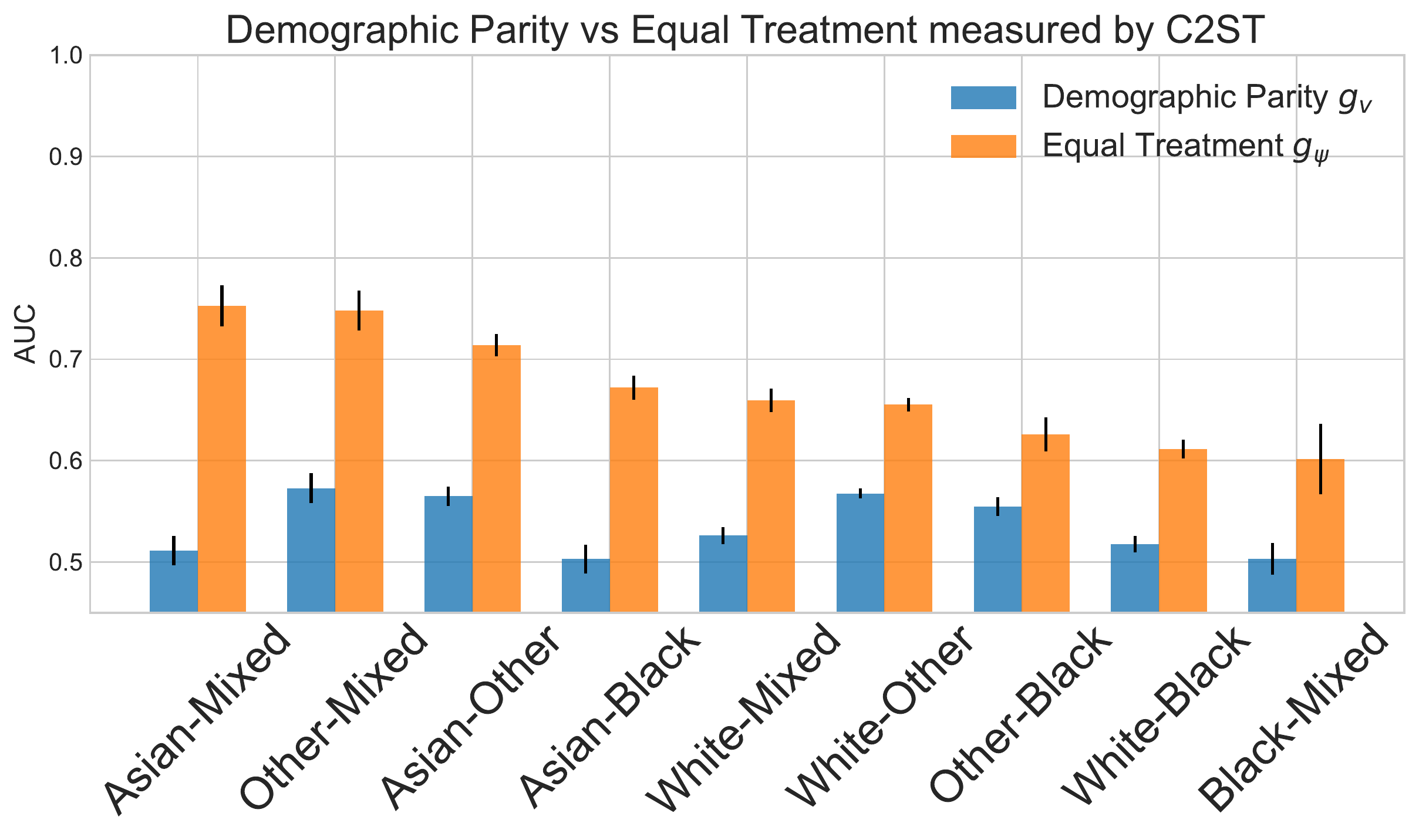}\hfill
\includegraphics[width=.49\textwidth]{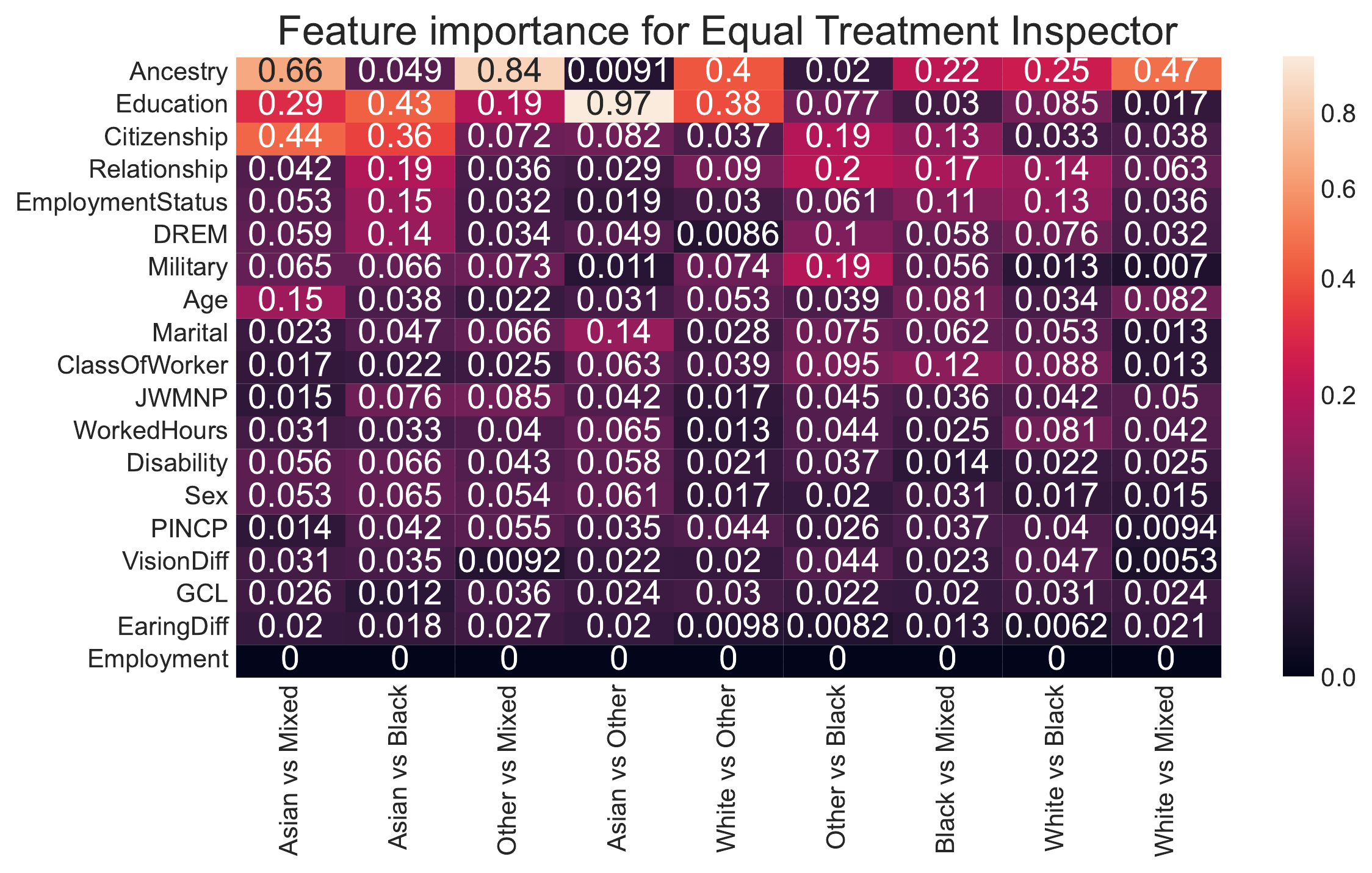}
\caption{Left: AUC of the inspector for ET and DP, over the district of California 2014 for the ACS Mobility dataset. Right: contribution of features to the ET inspector performance.}
\label{fig:xai.mobility}
\end{figure*}

\section{Additional Experiments}

In this section, we run additional experiments regarding C2ST, hyperparameters, and models for estimators $f_\theta$ and inspectors $g_\psi$.

\subsection{Statistical Independence Test via Classifier AUC Test}\label{app:stat.independence.exp}

We complement the experiments of Section \ref{sec:experiments} by reporting in Table \ref{tab:auc.stats} the results of the C2ST for group pair-wise comparisons. As discussed in Section \ref{sec:stat.independence}, we perform the statistical test $H_0: AUC=\nicefrac{1}{2}$ of the \enquote{Equal Treatment Inspector} using a Brunner-Munzel one tailed test against $H_1: AUC>\nicefrac{1}{2}$ as implemented in~\cite{2020SciPy-NMeth}. 
Table \ref{tab:auc.stats} reports the empirical AUC on test set, the confidence intervals at 95\% confidence level (columns \enquote{Low} and \enquote{High}), and the p-value of the test.  
The \enquote{Random} row regards a randomly assigned group and represents a baseline for comparison. The statistical tests clearly show that the AUC is significantly different from $\nicefrac{1}{2}$, also when correcting for multiple comparison tests.

\begin{table}[ht]
\centering
\caption{Results of the C2ST on the \enquote{Equal Treatment Inspector}. 
}\label{tab:auc.stats}
\begin{tabular}{c|ccccc}
\textit{\textbf{Pair}} & \textbf{AUC} & \textbf{Low} & \textbf{High} & \textbf{pvalue} & \textbf{Test Statistic} \\ \hline
\textit{Random}        & 0.501        & 0.494        & 0.507         & 0.813           & 0.236              \\
\textit{White-Other}   & 0.735        & 0.731        & 0.739         & $< 2.2e\text{-}16$             & 97.342             \\
\textit{White-Black}   & 0.62         & 0.612        & 0.627         & $< 2.2e\text{-}16$            & 27.581             \\
\textit{White-Mixed}   & 0.615        & 0.607        & 0.624         & $< 2.2e\text{-}16$             & 23.978             \\
\textit{Asian-Other}   & 0.795        & 0.79         & 0.8           & $< 2.2e\text{-}16$             & 107.784            \\
\textit{Asian-Black}   & 0.667        & 0.659        & 0.676         & $< 2.2e\text{-}16$             & 38.848             \\
\textit{Asian-Mixed}   & 0.644        & 0.634        & 0.653         & $< 2.2e\text{-}16$             & 28.235             \\
\textit{Other-Black}   & 0.717        & 0.708        & 0.725         & $< 2.2e\text{-}16$             & 48.967             \\
\textit{Other-Mixed}   & 0.697        & 0.688        & 0.707         & $< 2.2e\text{-}16$             & 39.925             \\
\textit{Black-Mixed}   & 0.598        & 0.586        & 0.61          & $< 2.2e\text{-}16$             & 15.451            
\end{tabular}
\end{table}
\begin{figure}[ht]
\centering
\includegraphics[width=.55\textwidth]{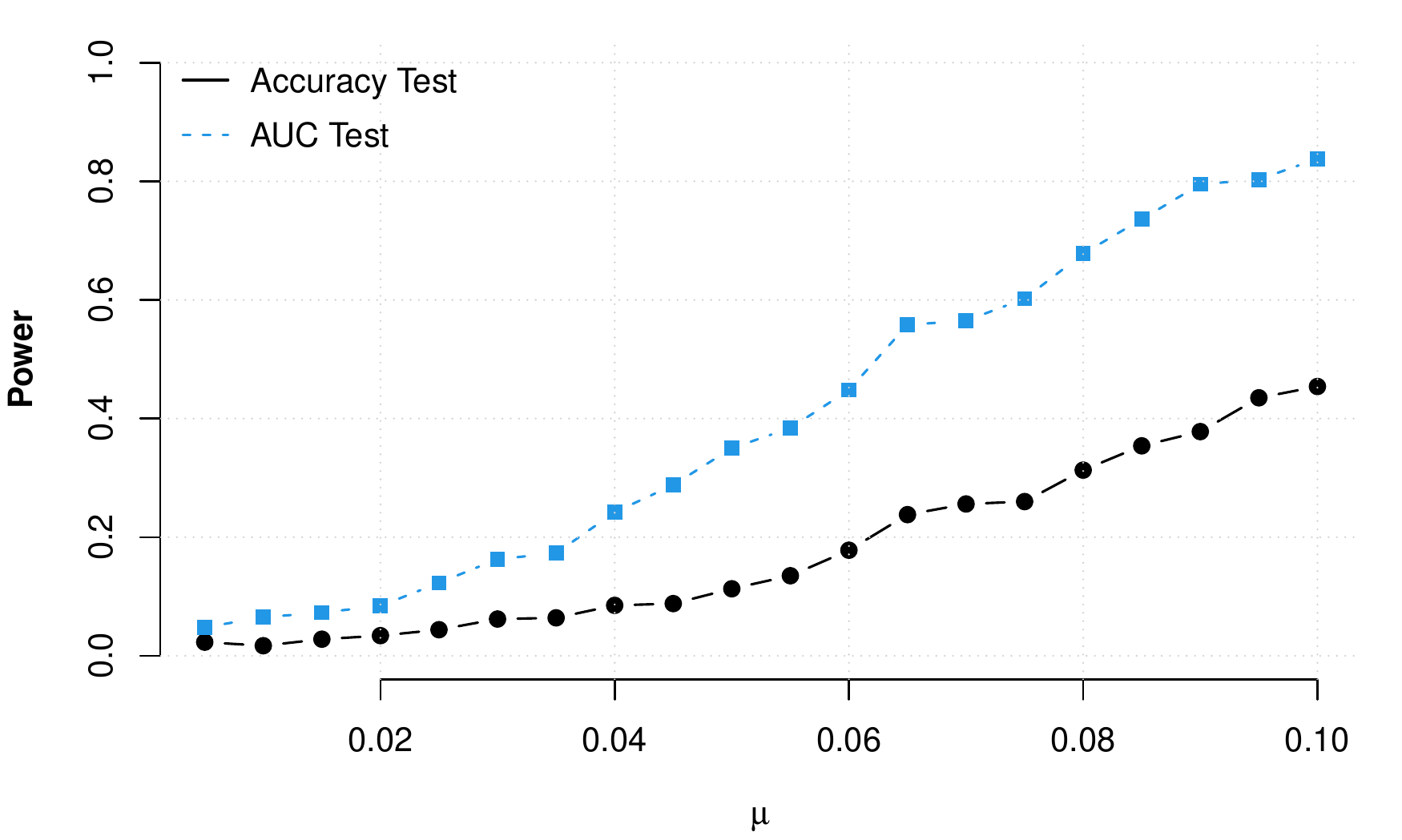}
\caption{Comparing the power of C2ST based on Accuracy vs AUC.}
\label{fig:power}
\end{figure}

We also compare the power of the C2ST based on the AUC against the two-sample test of \cite{DBLP:conf/iclr/Lopez-PazO17}, which is based on accuracy. We generate synthetic datasets where $Y \sim Ber(0.5)$ and $X = (X_1, X_2)$ with positives distributed as $N( (\mu, \mu), \Sigma)$ and negatives distributed as $N( (-\mu, -\mu), \Sigma)$, where $\Sigma = \begin{bmatrix}1 & 0.5 \\ 0.5 & 1 \end{bmatrix}$. Thus, the large the $\mu$, the easier is to distinguish the two distributions. Figure \ref{fig:power} reports the power of the AUC-based test vs the accuracy-based test using a logistic regression classifier, estimated by 1000 runs for each of the $\mu$'s ranging from $0.005$ to $0.1$. The figure highlights that, under such a setting, testing the AUC rather than the accuracy leads to a better power (probability of rejecting $H_0$ when it does not hold).

\subsection{Hyperparameters Evaluation}\label{app:hyperparameter}

This section presents an extension to our experimental setup, where we increase the model complexity by varying the model hyperparameters. We use the US Income dataset for the population of the CA14 district. 
We consider three models for $f_{\theta}$: Decision Trees, Gradient Boosting, and Random Forest. For the Decision Tree models, we vary the depth of the tree, while for the Gradient Boosting and Random Forest models, we vary the number of estimators. Shapley values are calculated by means of the TreeExplainer algorithm \citep{DBLP:journals/natmi/LundbergECDPNKH20}. For the ET inspector $g_{\psi}$, we consider logistic regession, and XGB.

Figure \ref{fig:xai.hyper} shows that less complex models, such as Decision Trees with maximum depth 1 or 2, are also less unfair. However, as we increase the model complexity, the unequal treatment of the model becomes more pronounced, achieving a plateau when the model has enough complexity. Furthermore, when we compare the results for different ET inspectors, we observe minimal differences (note that the y-axis takes different ranges).

\begin{figure}[ht]
\centering
\includegraphics[width=.49\textwidth]{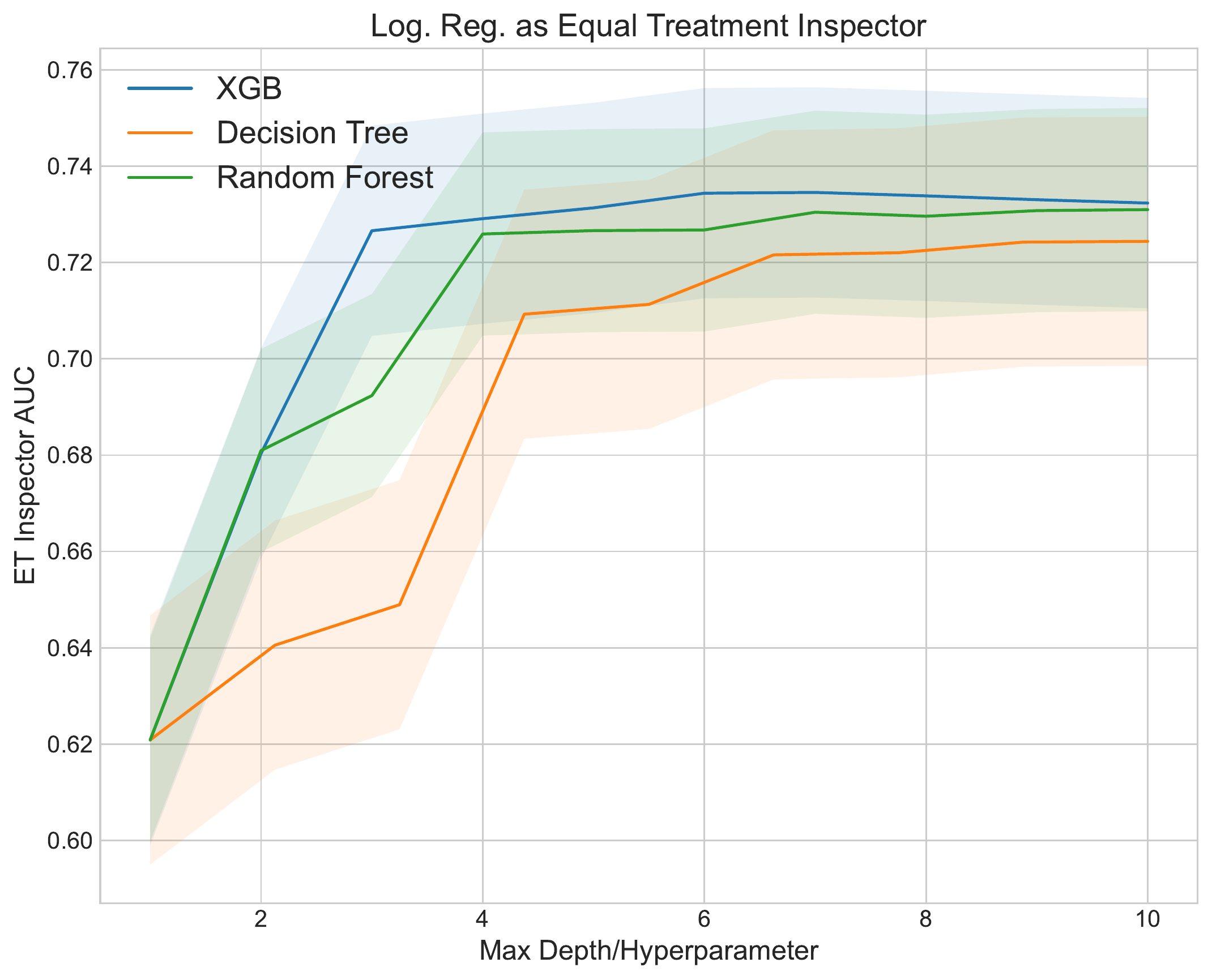}\hfill
\includegraphics[width=.49\textwidth]{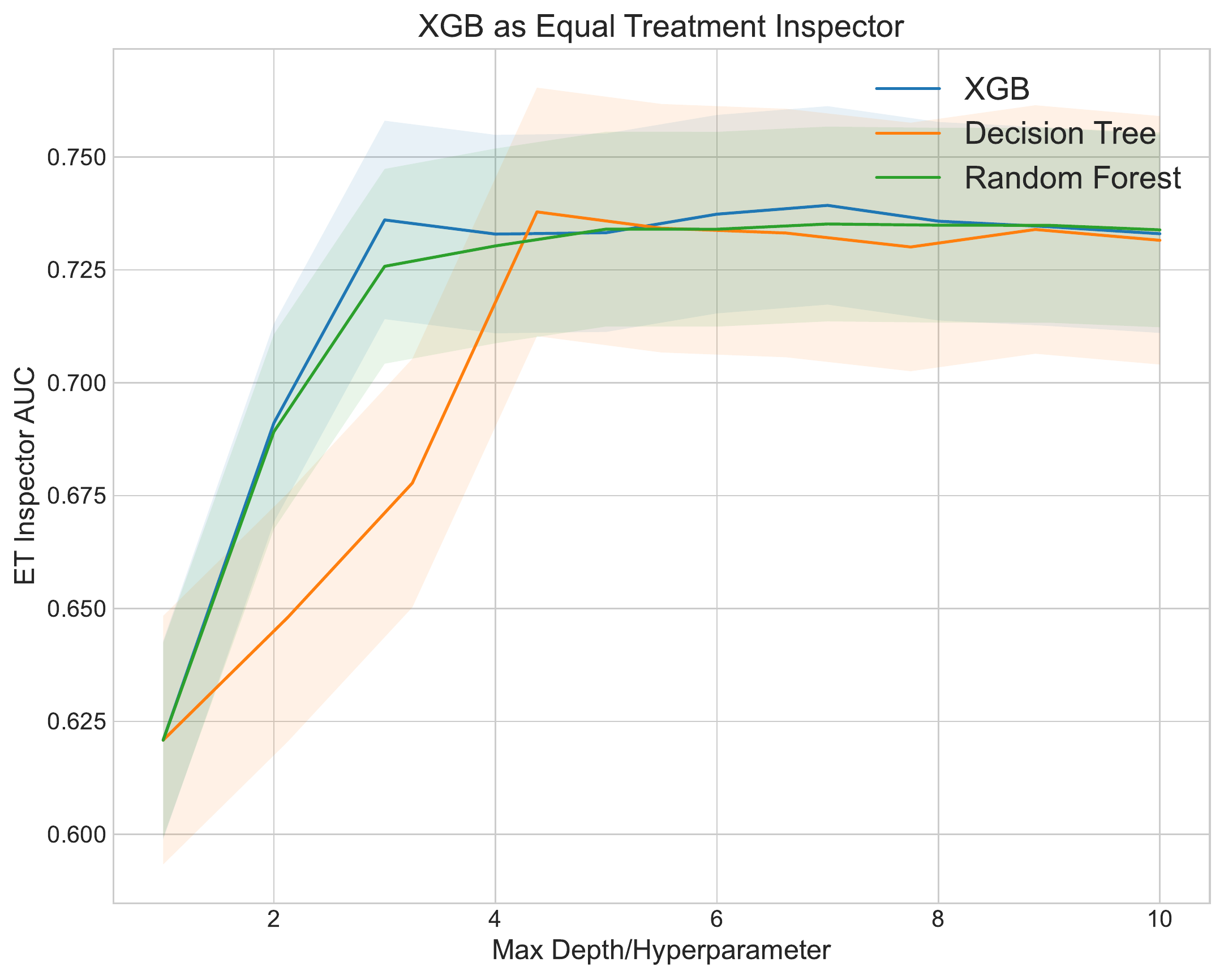}
\caption{AUC of the inspector for ET, over the district of CA14 for the US Income dataset.} 
\label{fig:xai.hyper}
\end{figure}

\subsection{Varying Estimator and Inspector}\label{subsec:EstimatorInspectorVariations}


We vary here the model $f_{\theta}$ and the inspector $g_{\psi}$ over a wide range of well-known classification algorithms. 
Table~\ref{tab:benchmark} shows that the choice of model and inspector impacts on the measure of Equal Treatment, namely the AUC of the inspector.  
%
By Theorem \ref{thm:main}, the larger the AUC of any inspector the smaller is the $p$-value of the null hypothesis $\Ss(f_\theta, X) \perp Z$. Therefore, inspectors able to achive the best AUC should be considered. Weak inspectors have lower probability of rejecting the null hypothesis when it does not hold.

\begin{table}[ht]\centering
\begin{tabular}{r|ccccc}
\multicolumn{1}{l|}{}       & \multicolumn{5}{c}{\textbf{Model $f_\theta$}}                                                     \\ \cline{2-6} 
\textbf{Inspector $g_\psi$}    & \textbf{DecisionTree} & \textbf{SVC} & \textbf{Logistic Reg.} & \textbf{RF} & \textbf{XGB} \\ \hline
\textbf{DecisionTree} & 0.631                 & 0.644        & 0.644                  & 0.664                  & 0.634        \\
\textbf{KNN}   & 0.737                 & 0.754        & 0.75                   & 0.744                  & 0.751        \\
\textbf{Logistic Reg.} & 0.767                 & 0.812        & 0.812                  & 0.812                  & 0.821        \\
\textbf{MLP}      & 0.786                 & 0.795        & 0.795                  & 0.813                  & 0.804        \\
\textbf{RF}       & 0.776                 & 0.782        & 0.781                  & 0.758                  & 0.795        \\
\textbf{SVC}                & 0.743                 & 0.807        & 0.807                  & 0.790                   & 0.810        \\
\textbf{XGB}      & 0.775                 & 0.780         & 0.780                   & 0.789                  & 0.790       
\end{tabular}
\caption{AUC of the ET inspector for different combinations of models and inspectors.}\label{tab:benchmark}
\end{table}

\subsection{Explaining ET Unfairness}\label{app:xai.eval}

We complement the results of the experimental Section \ref{subsec:exp.synthetic} with a further experiment relating the correlation hyperparameter $\gamma$ to the coefficients of an explainable ET inspector. We consider a synthetic dataset with one more feature, by drawing $10,000$ samples from a  $X_1 \sim N(0, 1)$, $X_2 \sim N(0, 1)$, and $(X_3,X_5)$ and $(X_4,X_5)$ following bivariate normal distributions $N\left(\begin{bmatrix}0 \ 0 \end{bmatrix},\begin{bmatrix}1 & \gamma \ \gamma & 1 \end{bmatrix}\right)$ and $N\left(\begin{bmatrix}0 \ 0 \end{bmatrix},\begin{bmatrix}1 & \gamma0.5 \ \gamma0.5 & 1 \end{bmatrix}\right)$, respectively. We define the binary protected feature $Z$ with values $Z=1$ if $X_5 > 0$ and $Z=0$ otherwise. 
As in Section \ref{subsec:exp.synthetic}, we consider two experimental scenarios. In the first scenario, the indirect case, we have unfairness in the data and in the model. The targe feature is $Y = \sigma(X_1 + X_2 + X_3 + X_4)$, where $\sigma$ is the logistic function. In the second scenario, the uninformative case, we have unfairness in the data and fairness in the model. The target feature is $Y = \sigma(X_1 + X_2)$.

Figure \ref{fig:xai.coef} shows howt the coefficients of the inspector $g_{\psi}$ vary with correlation $\gamma$ in both scenario. In the indirect case, coefficients for $\Ss(f_{\theta}, X_1)_1$ and $\Ss(f_{\theta}, X_1)_2$  correctly attributes zero importance to such variables, while coefficients for $\Ss(f_{\theta}, X_1)_3$ and $\Ss(f_{\theta}, X_1)_4$ grow linearly with $\gamma$, and with the one for $\Ss(f_{\theta}, X_1)_3$ with higher slope as expected. In the uninformative case, coefficients are correctly zero for all variables.

\begin{figure}[ht]
\centering
\includegraphics[width=.49\textwidth]{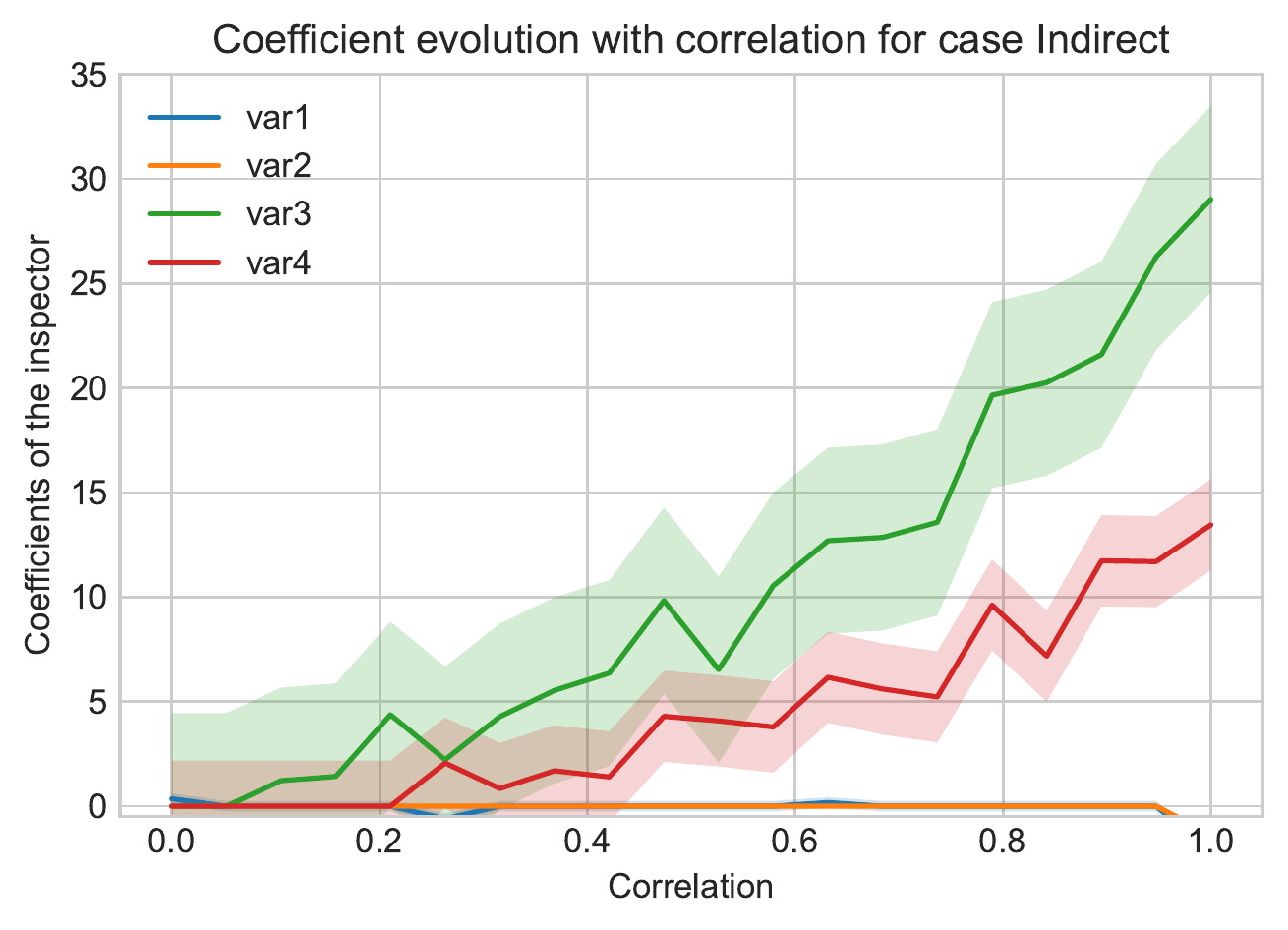}\hfill
\includegraphics[width=.49\textwidth]{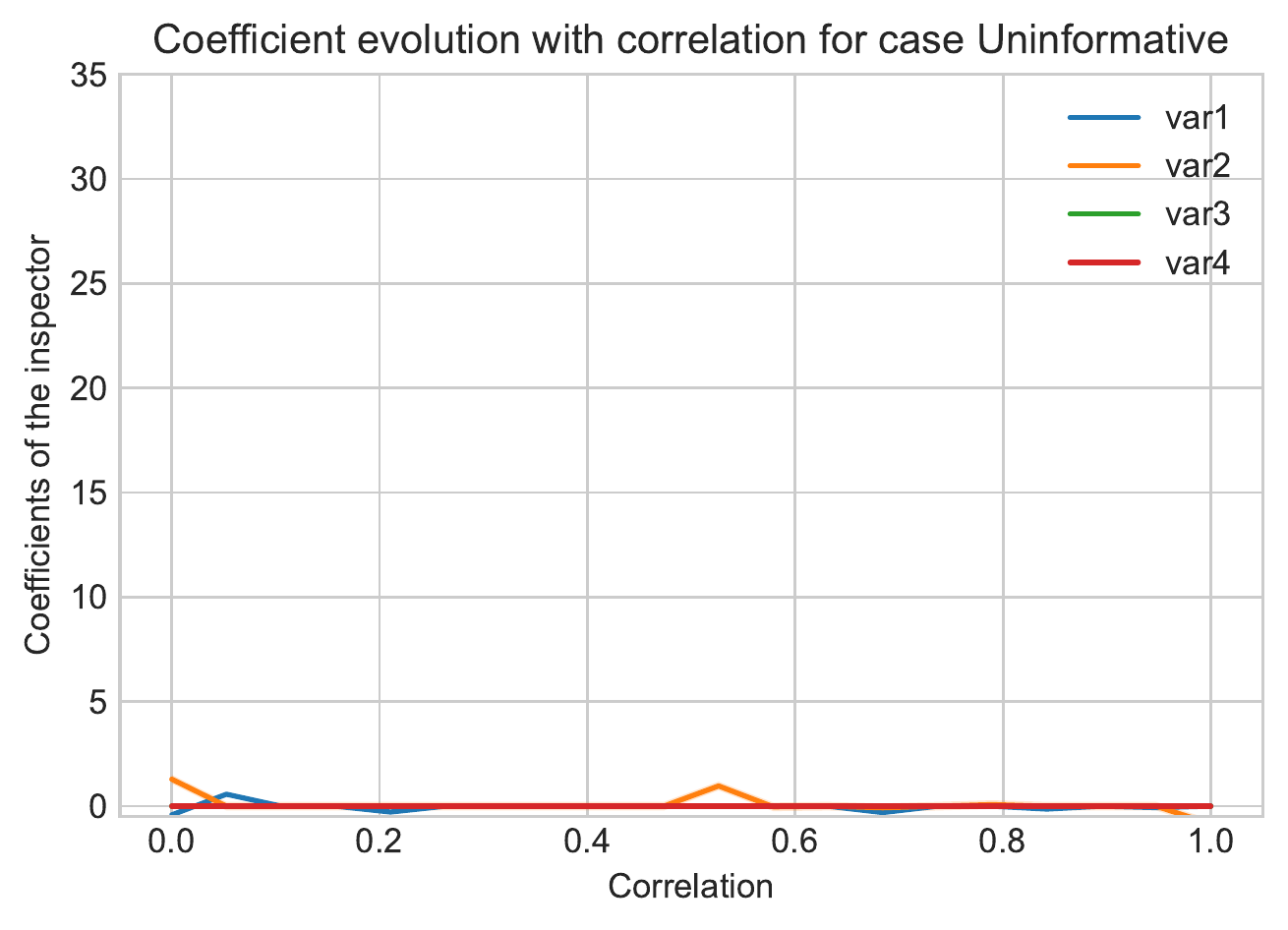}
\caption{Coefficient of $g_{\psi}$ over $\gamma$ for synthetic datasets in two experimental scenarios.}
\label{fig:xai.coef}
\end{figure}


\subsection{Statistical Comparison of Demographic Parity versus Equal Treatment}\label{app:stat.DP.ET}

So far, we measured ET and DP fairness usingthe AUC of an inspector, $g_{\psi}$ and $g_v$ respectively (see Section \ref{sec:exp}). For DP, however, other probability density distance metrics can be considered, including the p-value of the Kolmogorov–Smirnov (KS) test and the Wasserstein distance. Table \ref{table:dist} reports all such distances in the format ``mean $\pm$ stdev" calculated over 100 random sampled datasets. 
The pairs of group comparisons are sorted by descending AUC values. 
We highlight in red values below the threshold of $0.05$ for the KS test, of $0.55$ for the AUC of the C2ST, and of $0.05$ for the Wasserstein distance.  
They represent cases where ET violation occurs, but no DP violation is measured (with different metrics).

\begin{table}[ht]
\caption{Comparison of ET and DP measured in differnt ways. Case of ET violaion but no DP violation are highlighted in red.}\label{table:dist}
\begin{tabular}{c|c|c|ccc}
Pair        & Data       & Equal treatment  & \multicolumn{3}{c}{Demographic Parity}                 \\ \hline
            &            & C2ST(AUC)        & C2ST(AUC)             & KS(pvalue)         & Wasserstein      \\ \hline
Asian-Other &  Income & $0.794 \pm 0.004$ & $0.709 \pm 0.004$ & $0.338 \pm 0.007$ & $0.256 \pm 0.004$ \\
White-Other &  Income & $0.734 \pm 0.002$ & $0.675 \pm 0.003$ & $0.282 \pm 0.003$ & $0.209 \pm 0.002$ \\
Other-Black &  Income & $0.724 \pm 0.004$ & $0.628 \pm 0.006$ & $0.216 \pm 0.007$ & $0.143 \pm 0.004$ \\
Other-Mixed &  Income & $0.707 \pm 0.005$ & $0.593 \pm 0.005$ & $0.169 \pm 0.006$ & $0.117 \pm 0.004$ \\
Asian-Black &  Income & $0.664 \pm 0.008$ & $0.587 \pm 0.004$ & $0.142 \pm 0.005$ & $0.111 \pm 0.004$ \\
Asian-Mixed &  Income & $0.644 \pm 0.005$ & $0.607 \pm 0.006$ & $0.159 \pm 0.008$ & $0.128 \pm 0.006$ \\
White-Mixed &  Income & $0.613 \pm 0.005$ & $0.546 \pm 0.005$ & $0.082 \pm 0.004$ & \textcolor{red}{$0.058 \pm 0.002$} \\
White-Black &  Income & $0.613 \pm 0.005$ & $0.57 \pm 0.007$ & $0.113 \pm 0.008$ & $0.08 \pm 0.006$ \\
Black-Mixed &  Income & $0.603 \pm 0.006$ & \textcolor{red}{$0.523 \pm 0.007$} & \textcolor{red}{$0.055 \pm 0.007$} & \textcolor{red}{$0.023 \pm 0.004$} \\ \hline
Asian-Black &  TravelTime & $0.677 \pm 0.052$ & \textcolor{red}{$0.502 \pm 0.011$} & \textcolor{red}{$0.021 \pm 0.009$} & \textcolor{red}{$0.01 \pm 0.003$} \\
Asian-Other &  TravelTime & $0.653 \pm 0.024$ & \textcolor{red}{$0.528 \pm 0.006$} & \textcolor{red}{$0.053 \pm 0.011$} & \textcolor{red}{$0.027 \pm 0.004$} \\
Asian-Mixed &  TravelTime & $0.647 \pm 0.013$ & $0.557 \pm 0.003$ & $0.096 \pm 0.004$ & \textcolor{red}{$0.045 \pm 0.002$} \\
White-Other &  TravelTime & $0.636 \pm 0.02$ & $0.568 \pm 0.007$ & $0.107 \pm 0.01$ & $0.06 \pm 0.005$ \\
Other-Mixed &  TravelTime & $0.618 \pm 0.017$ & \textcolor{red}{$0.546 \pm 0.011$} & $0.079 \pm 0.012$ & \textcolor{red}{$0.043 \pm 0.006$} \\
Other-Black &  TravelTime & $0.615 \pm 0.021$ & \textcolor{red}{$0.526 \pm 0.011$} & \textcolor{red}{$0.049 \pm 0.014$} & \textcolor{red}{$0.026 \pm 0.006$} \\
White-Black &  TravelTime & $0.599 \pm 0.006$ & $0.569 \pm 0.004$ & $0.12 \pm 0.006$ & \textcolor{red}{$0.057 \pm 0.003$} \\
Black-Mixed &  TravelTime & $0.588 \pm 0.012$ & $0.557 \pm 0.012$ & $0.098 \pm 0.015$ & \textcolor{red}{$0.0557 \pm 0.001$} \\
White-Mixed &  TravelTime & $0.557 \pm 0.008$ & \textcolor{red}{$0.497 \pm 0.006$} & \textcolor{red}{$0.016 \pm 0.004$} & \textcolor{red}{$0.006 \pm 0.002$} \\ \hline
Other-Black &  Employment & $0.744 \pm 0.008$ & \textcolor{red}{$0.524 \pm 0.005$} & \textcolor{red}{$0.036 \pm 0.005$} & \textcolor{red}{$0.036 \pm 0.004$} \\
Asian-Other &  Employment & $0.711 \pm 0.011$ & $0.557 \pm 0.003$ & $0.066 \pm 0.004$ & $0.066 \pm 0.003$ \\
White-Other &  Employment & $0.695 \pm 0.007$ & \textcolor{red}{$0.524 \pm 0.003$} & $0.019 \pm 0.005$ & $0.019\pm 0.002$ \\
Other-Mixed &  Employment & $0.683 \pm 0.022$ & $0.557 \pm 0.008$ & $0.083 \pm 0.005$ & $0.083 \pm 0.003$ \\
Black-Mixed &  Employment & $0.678 \pm 0.028$ & \textcolor{red}{$0.534 \pm 0.007$} & \textcolor{red}{$0.049 \pm 0.007$} & \textcolor{red}{$0.048 \pm 0.004$} \\
Asian-Mixed &  Employment & $0.671 \pm 0.019$ & $0.61 \pm 0.006$ & $0.0144 \pm 0.006$ & $0.145 \pm 0.004$ \\
Asian-Black &  Employment & $0.655 \pm 0.021$ & $0.587 \pm 0.004$ & $0.106 \pm 0.006$ & $0.106 \pm 0.004$ \\
White-Mixed &  Employment & $0.651 \pm 0.009$ & $0.581 \pm 0.006$ & $0.095 \pm 0.004$ & $0.095 \pm 0.003$ \\ 
White-Black &  Employment & $0.619 \pm 0.011$ & \textcolor{red}{$0.544 \pm 0.004$} & \textcolor{red}{$0.049 \pm 0.003$} & \textcolor{red}{$0.049 \pm 0.002$} \\\hline
Asian-Mixed &  Mobility & $0.753 \pm 0.02$ & \textcolor{red}{$0.511 \pm 0.014$} & \textcolor{red}{$0.04 \pm 0.012$} & \textcolor{red}{$0.014\pm 0.006$} \\
Other-Mixed &  Mobility & $0.748 \pm 0.02$ & $0.573 \pm 0.015$ & $0.113 \pm 0.017$ & \textcolor{red}{$0.062 \pm 0.009$} \\
Asian-Other &  Mobility & $0.714 \pm 0.011$ & $0.565 \pm 0.01$ & $0.114 \pm 0.011$ & \textcolor{red}{$0.054 \pm 0.005$} \\
Asian-Black &  Mobility & $0.672 \pm 0.012$ & \textcolor{red}{$0.503 \pm 0.014$} & \textcolor{red}{$0.032 \pm 0.011$} & \textcolor{red}{$0.012 \pm 0.004$} \\
Other-Black &  Mobility & $0.66 \pm 0.012$ & \textcolor{red}{$0.526 \pm 0.009$} & \textcolor{red}{$0.044 \pm 0.009$} & \textcolor{red}{$0.02 \pm 0.004$} \\
White-Mixed &  Mobility & $0.655 \pm 0.007$ & $0.568 \pm 0.005$ & $0.105 \pm 0.007$ & \textcolor{red}{$0.044 \pm 0.003$} \\ 
White-Other &  Mobility & $0.626 \pm 0.017$ & $0.555 \pm 0.009$ & $0.091 \pm 0.01$ & \textcolor{red}{$0.046\pm 0.005$} \\
White-Black &  Mobility & $0.611 \pm 0.009$ & \textcolor{red}{$0.518 \pm 0.008$} & \textcolor{red}{$0.043 \pm 0.008$} & \textcolor{red}{$0.017 \pm 0.004$} \\
Black-Mixed &  Mobility & $0.602 \pm 0.035$ & \textcolor{red}{$0.503 \pm 0.016$} & \textcolor{red}{$0.031 \pm 0.013$} & \textcolor{red}{$0.012 \pm 0.006$} \\\hline
\end{tabular}
\end{table}
\clearpage
\section{LIME as an Alternative to Shapley Values}\label{app:LIME}

The definition of ET (Def. \ref{def:et}) is parametric in the explanation function. We used Shapley values for their theoretical advantages (see Appendix~\ref{app:foundation.xai}).
Another widely used feature attribution technique is 
LIME (Local Interpretable Model-Agnostic Explanations). The intuition behind LIME is to create a local linear model that approximates the behavior of the original model in a small neighbourhood of the instance to explain~\citep{ribeiro2016why,ribeiro2016modelagnostic}, whose mathematical intuition is very similar to the Taylor/Maclaurin series. 
This section discusses the differences in our approach when adopting LIME instead of the SHAP implementation of Shapley values. First of all, LIME has certain drawbacks:

\begin{itemize}
    \item \textbf{Computationally Expensive:} Its current implementation is more computationally expensive than current SHAP implementations such as TreeSHAP~\citep{DBLP:journals/natmi/LundbergECDPNKH20}, Data SHAP \citep{DBLP:conf/aistats/KwonRZ21,DBLP:conf/icml/GhorbaniZ19}, or Local and Connected SHAP~\citep{DBLP:conf/iclr/ChenSWJ19}. This problem is exacerbated when producing explanations for multiple instances (as in our case). In fact, LIME requires sampling data and fitting a linear model, which is a computationally more expensive approach than the aforementioned model-specific approaches to SHAP. A comparison of the execution  time is reported in the next sub-section.
    
    \item \textbf{Local Neighborhood:} The randomness in the calculation of local neighbourhoods can lead to instability of the LIME explanations. 
    Works including \cite{DBLP:conf/aies/SlackHJSL20,alvarez2018towards,adebayo2018sanity} highlight that several types of feature attributions explanations, including LIME, can vary greatly.
    
    \item \textbf{Dimensionality:} LIME requires, as a hyperparameter, the number of features to use for the local linear model. For our method, all the features in the explanation distribution should be used. However, linear models suffer from the curse of dimensionality. In our experiments, this is not apparent, since our synthetic and real datasets are low-dimensional.  
\end{itemize}

\begin{figure}[ht]
\centering
\includegraphics[width=.49\textwidth]{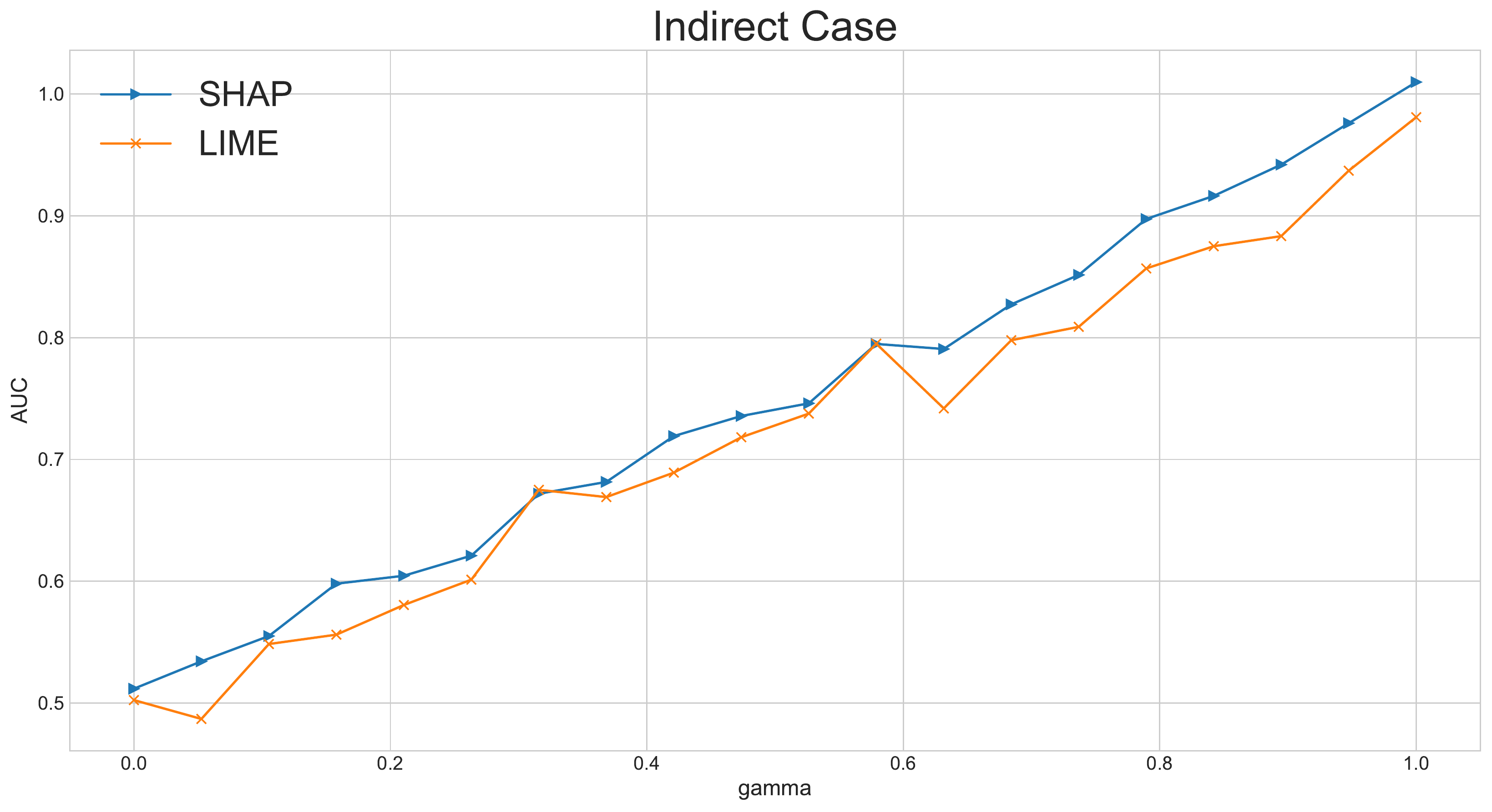}\hfill
\includegraphics[width=.49\textwidth]{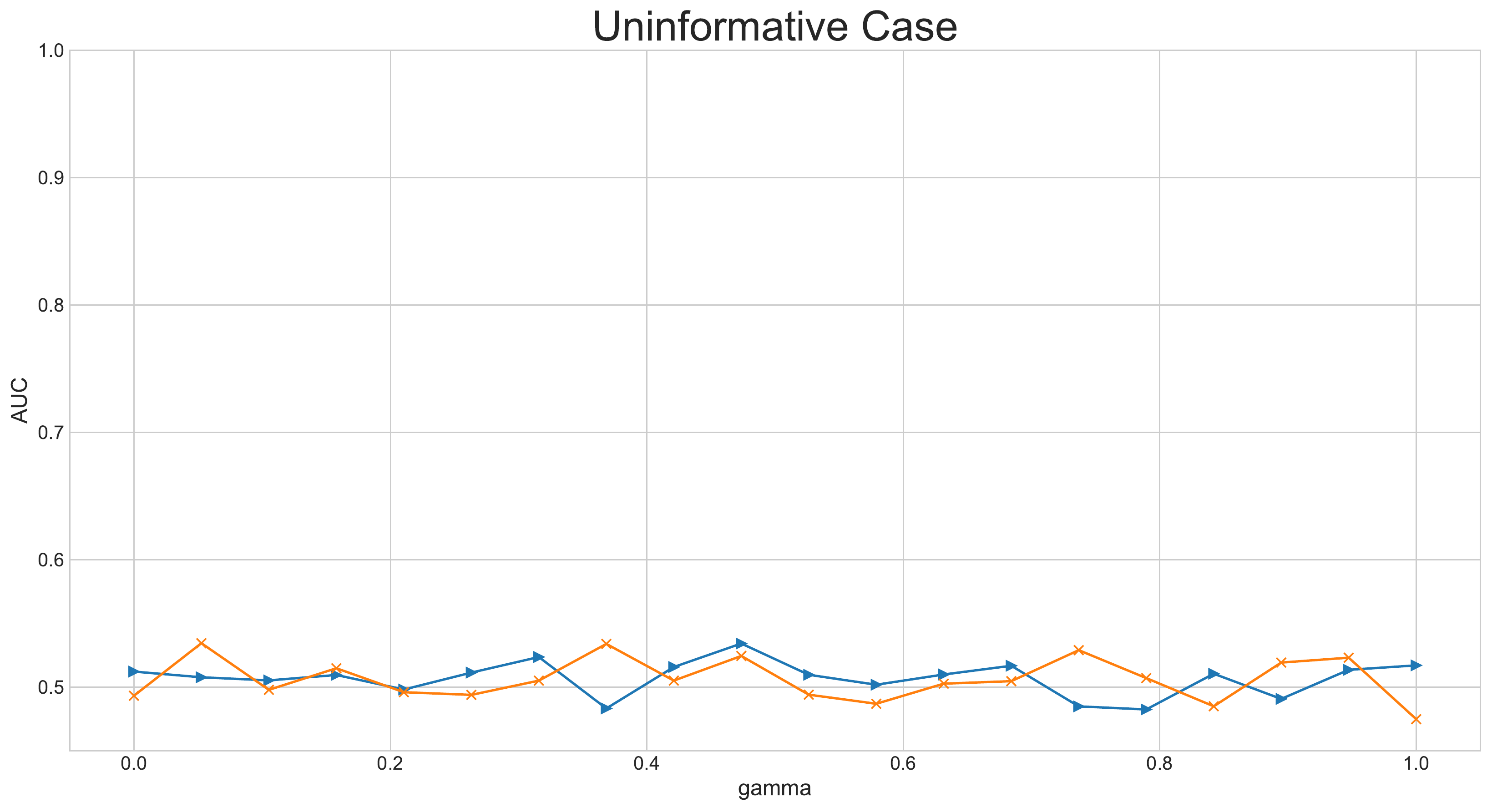}
\caption{AUC of the ET inspect using SHAP vs using LIME.}
\label{fig:lime}
\end{figure}

Figure \ref{fig:lime} compares the AUC of the ET inspector using SHAP and LIME as explanation functions over the synthetic dataset of Section \ref{subsec:exp.synthetic}. In both scenarios (indirect case and uninformative case), the two approaches have similar results. In both cases, however, the stability of using SHAP is better than using LIME.


\subsection{Runtime}

We conduct an analysis of the runtimes of generating the explanation distributions using TreeShap vs LIME. 
We adopt\texttt{shap} version $0.41.0$ and \texttt{lime} version $0.2.0.1$ as software packages. In order to define the local neighborhood for both methods in this example, we used all the data provided as background data. The model $f_{\theta}$ is set to \texttt{xgboost}. As data we produce a randon generated data matrix, of varying dimensions. When varying the number of samples, we use 5 features, and when varying the number of features, we use$1000$ samples.
Figure \ref{fig:computational} shows the elapsed  time for generating explanation distributions with varying numbers of samples and columns. 

The runtime required to generate explanation distributions using LIME is considerably greater than using SHAP. 
The difference becomes more pronounced as the number of samples and features increases.
%
%

\begin{figure}[ht]
\centering
\includegraphics[width=.49\textwidth]{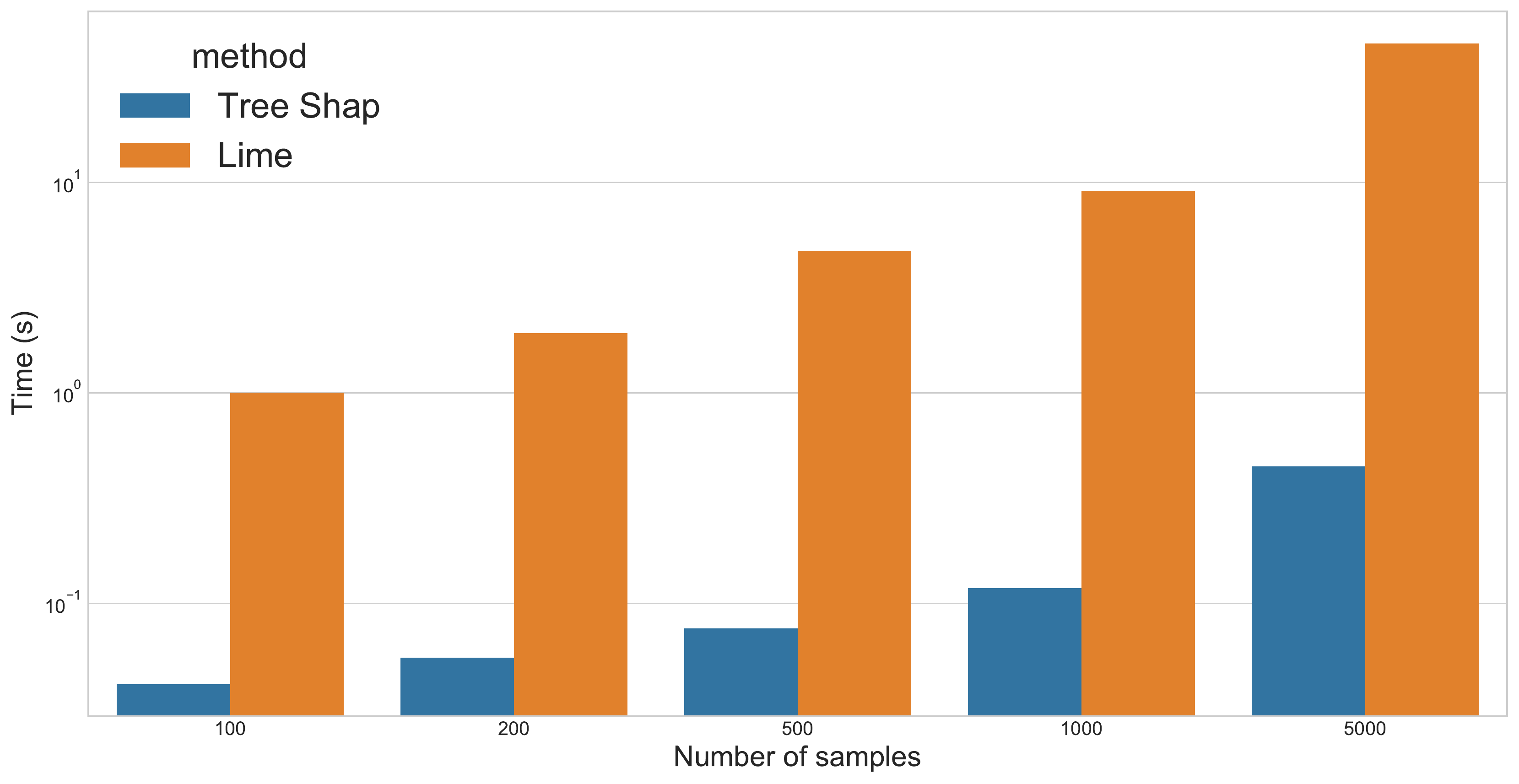}\hfill
\includegraphics[width=.49\textwidth]{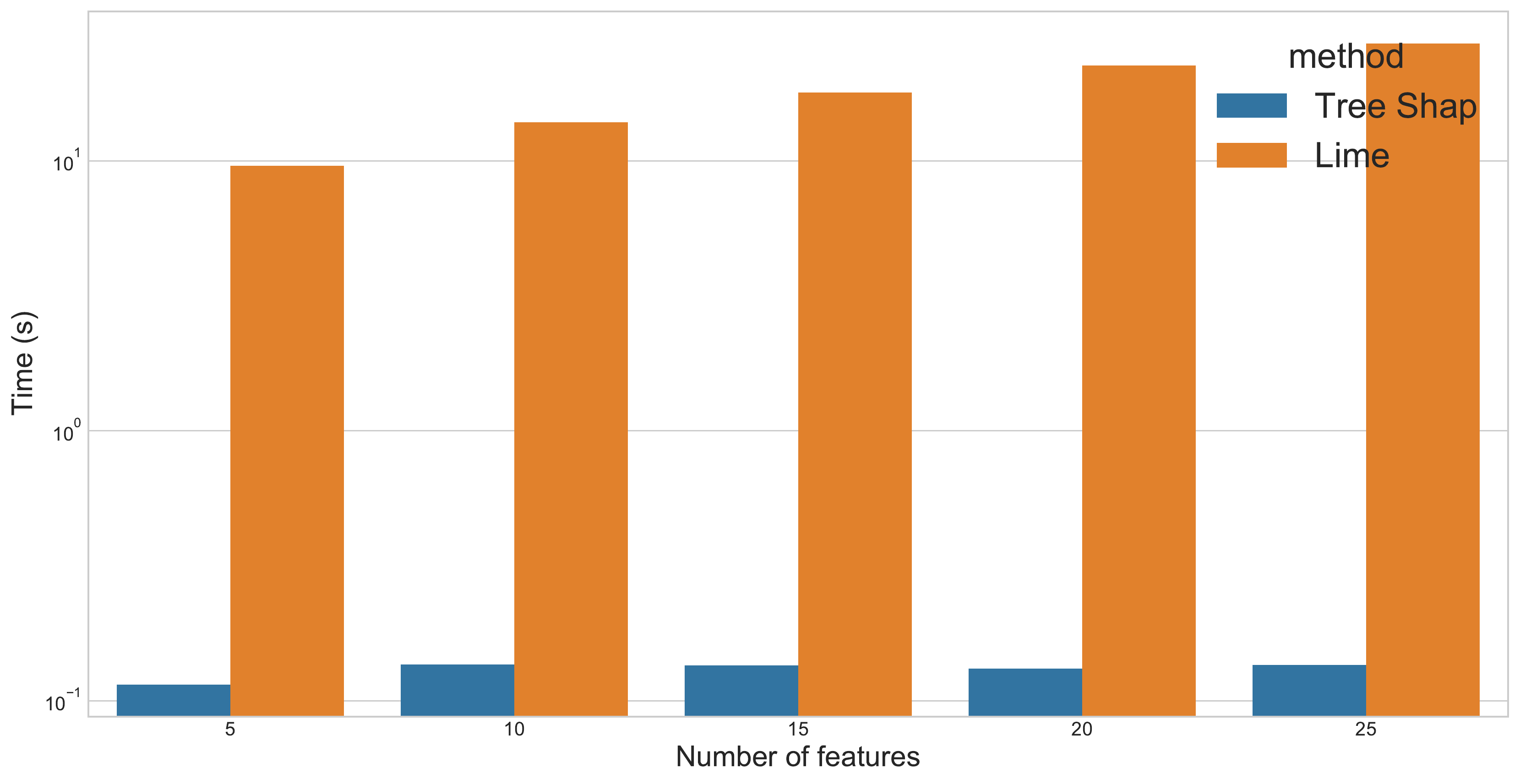}
\caption{Elapsed time for generating explanation distributions using SHAP and LIME with different numbers of samples (left) and features (right) on synthetically generated datasets. Note that the y-scale is logarithmic.} 
\label{fig:computational}
\end{figure}

\end{document}